\newlength\aftertitskip     \newlength\beforetitskip
\newlength\interauthorskip  \newlength\aftermaketitskip
\def\maketitle{\par
 \begingroup
   \def\thefootnote{\fnsymbol{footnote}}
   \def\@makefnmark{\hbox to 4pt{$^{\@thefnmark}$\hss}}
   \@maketitle \@thanks
 \endgroup
\setcounter{footnote}{0}
 \let\maketitle\relax \let\@maketitle\relax
 \gdef\@thanks{}\gdef\@author{}\gdef\@title{}\let\thanks\relax}
\def\@startauthor{\noindent \normalsize\bf}
\def\@endauthor{}
\def\@starteditor{\noindent \small {\bf Editor:~}}
\def\@endeditor{\normalsize}
\def\@maketitle{\vbox{\hsize\textwidth
 \linewidth\hsize \vskip \beforetitskip
 {\begin{center} \LARGE\@title \par \end{center}} \vskip \aftertitskip
 {\def\and{\unskip\enspace{\rm and}\enspace}%
  \def\addr{\small\it}%
  \def\email{\hfill\small\tt}%
  \def\name{\normalsize\bf}%
  \def\AND{\@endauthor\rm\hss \vskip \interauthorskip \@startauthor}
  \@startauthor \@author \@endauthor}
}}
\numberwithin{equation}{section}
\newcommand{\nlsum}{\sum\nolimits}
\newcommand{\reals}{\mathbf{R}}
              \newcommand{\co}{\mathcal{O}}          
\newcommand{\argmax}{\textrm{argmax}}
\newcommand{\argmin}{\textrm{argmin}}
\theoremstyle{plain}
\newtheorem{theorem}{Theorem}
\newtheorem{corollary}[theorem]{Corollary}
\newtheorem{prop}[theorem]{Proposition}
\newtheorem{lemma}[theorem]{Lemma}
\theoremstyle{definition}
\theoremstyle{remark}
\newcommand{\eps}{\varepsilon}
\newcommand{\comb}[1]{\begin{bmatrix}A_{#1}\\ \sqrt{\eps}(I_m)_{#1}\end{bmatrix}}
\newcommand{\combine}{\left[\begin{array}{c}A\\ \sqrt{\eps}I_m\end{array}\right]}
\newcommand{\combinei}{\left[\begin{array}{c}A\\ \sqrt{\eps}I_m\end{array}\right]}
\newcommand{\comball}{\left[\begin{array}{c}B_{T_c}\\ \sqrt{\eps}U_{T_c} \\ \sqrt{\eps}C_{T_c}\end{array}\right]}
\newcommand{\comballw}{\left[\begin{array}{c}B_{T_c}\\ \sqrt{\eps}W_{T_c} \end{array}\right]}
\DeclareMathOperator{\Diag}{Diag}
\newcommand{\iver}[1]{\llbracket #1 \rrbracket}
\newcommand{\unif}{\texttt{Unif}}
\newcommand{\lev}{\texttt{Lev}}
\newcommand{\pl}{\texttt{PL}}
\newcommand{\smpl}{\texttt{Smpl}}
\newcommand{\greedy}{\texttt{Greedy}}
\newcommand{\fed}{\texttt{Fedorov}}
\title{Polynomial Time Algorithms for Dual Volume Sampling}
\author{\name Chengtao Li \email{ctli@mit.edu}\\
  \name Stefanie Jegelka \email{stefje@csail.mit.edu}\\
  \name Suvrit Sra \email{suvrit@mit.edu}\\
  \addr{Massachusetts Institute of Technology}
}
\begin{document}
\maketitle

\begin{abstract}
  We study \emph{dual volume sampling}, a method for selecting $k$ columns from an $n\times m$ short and wide matrix ($n\le k\le m$) such that the probability of selection is proportional to the \emph{volume} spanned by the rows of the induced submatrix. This method was proposed by~\citeauthor{avron2013faster} (\citeyear{avron2013faster}), who showed it to be a promising method for column subset selection and its multiple applications. However, its wider adoption has been hampered by the lack of polynomial time sampling algorithms. We remove this hindrance by developing an exact (randomized) polynomial time sampling algorithm as well as its derandomization. Thereafter, we study dual volume sampling via the theory of real stable polynomials and prove that its distribution satisfies the ``Strong Rayleigh'' property. This result has numerous consequences, including a provably fast-mixing Markov chain sampler that makes dual volume sampling much more attractive to practitioners. This sampler is closely related to classical algorithms for popular experimental design methods that are to date lacking theoretical analysis but are known to empirically work well. 
\end{abstract}

\vspace*{-5pt}
\section{Introduction}
\vspace*{-5pt}

A variety of applications share the core task of selecting a subset of columns from a short, wide matrix $A$ with $n$ rows and $m > n$ columns. The criteria for selecting these columns typically aim at preserving information about the span of $A$ while  generating a well-conditioned submatrix. Classical and recent examples include experimental design, where we select observations or experiments~\citep{pukelsheim2006optimal}; preconditioning for solving linear systems and constructing low-stretch spanning trees (here $A$ is a version of the node-edge incidence matrix and we select edges in a graph)~\cite{avron2013faster,arioli2015preconditioning}; matrix approximation~\citep{boutsidis2009improved,boutsidis2014near,halTro2011}; feature selection in $k$-means clustering~\citep{boutsidis2013deterministic,boutsidis2011stochastic}; sensor selection~\citep{joshi2009sensor} and graph signal processing~\cite{chen15discrete,tsitsvero16signals}.

In this work, we study a randomized approach that holds promise for all of these applications. This approach relies on sampling columns of $A$ according to a probability distribution defined over its submatrices: the probability of selecting a set $S$ of $k$ columns from $A$, with $n \leq k \leq m$, is
\begin{align}
  \label{eq:2}
  P(S; A) \propto \det(A_S A_S^\top),
\end{align}
where $A_S$ is the submatrix consisting of the selected columns. This distribution is reminiscent of \emph{volume sampling}, where $k < n$ columns are selected with probability proportional to the determinant $\det(A_S^\top A_S)$ of a $k \times k$ matrix, i.e., the squared volume of the parallelepiped spanned by the selected columns. (Volume sampling does \emph{not} apply to $k > n$ as the involved determinants vanish.) In contrast, $P(S;A)$ uses the determinant of an $n \times n$ matrix and uses the volume spanned by the \emph{rows} formed by the selected columns. Hence we refer to $P(S;A)$-sampling as \emph{dual volume sampling (DVS)}.

\paragraph{Contributions.} Despite the ostensible similarity between volume sampling and DVS, and despite the many practical implications of DVS outlined below, efficient algorithms for DVS are not known and were raised as open questions in \cite{avron2013faster}. In this work, we make two key contributions:
\begin{list}{–}{\leftmargin=2em}
\vspace*{-5pt}
\setlength{\itemsep}{-1pt}
\item We develop polynomial-time randomized sampling algorithms and their derandomization for DVS. Surprisingly, our proofs require only elementary (but involved) matrix manipulations.
\item We establish that $P(S;A)$ is a \emph{Strongly Rayleigh} measure \citep{borcea2009negative}, a remarkable property that captures a specific form of negative dependence. Our proof relies on the theory of real stable polynomials, and the ensuing result implies a provably fast-mixing, practical  MCMC sampler. Moreover, this result implies concentration properties for dual volume sampling. 
\end{list}

In parallel with our work,~\cite{derezinski2017unbiased} also proposed a polynomial time sampling algorithm that works efficiently in practice. Our work goes on to further uncover the hitherto unknown ``Strong Rayleigh'' property of DVS, which has important consequences, including those noted above.

\vspace*{-6pt}
\subsection{Connections and implications.}
\vspace*{-6pt}
The selection of $k \geq n$ columns from a short and wide matrix has many applications. Our algorithms for DVS hence have several implications and connections; we note a few below.

\textbf{Experimental design.} The theory of optimal experiment design explores several criteria for selecting the set of columns (experiments) $S$. Popular choices are
\begin{align}\label{eq:opt}
  \nonumber
  & S\in\argmin_{S\subseteq \{1,\ldots,m\}}  J(A_S), \text{ with }\;\;\;\;\;  J(A_S) = \|A_S^\dagger\|_F = \| (A_SA_S^\top)^{-1}\|_F \text{ (A-optimal design) },\\ 
  &J(A_S) = \|A_S^\dagger\|_2 \text{ (E-optimal design) }, \;
  J(A_S) = - \log\det(A_S A_S^\top) \text{ (D-optimal design). }
\end{align}
Here, $A^\dagger$ denotes the Moore-Penrose pseudoinverse of $A$, and the minimization ranges over all $S$ such that $A_S$ has full row rank $n$. A-optimal design, for instance, is statistically optimal for linear regression \cite{pukelsheim2006optimal}.

Finding an optimal solution for these design problems is NP-hard; and most discrete algorithms use local search \cite{miller1994algorithm}. \citet[Theorem~3.1]{avron2013faster} show that dual volume sampling yields an approximation guarantee for both A- and E-optimal design: if $S$ is sampled from $P(S;A)$, then
\begin{align}\label{eq:bounds}
\mathbb{E}\left[\|A_S^\dagger\|_F^2\right] \le {m - n + 1\over k-n+1}\|A^\dagger\|_F^2;\quad 
\mathbb{E}\left[\|A_S^\dagger\|_2^2\right] \le \left(1 + {n(m-k)\over k-n+1}\right) \|A^\dagger\|_2^2.
\end{align}
\citet{avron2013faster} provide a polynomial time sampling algorithm only for the case $k = n$. Our algorithms achieve the bound \eqref{eq:bounds} in expectation, and the derandomization in Section~\ref{sec:derand} achieves the bound deterministically. \citet{wang16computationally} recently (in parallel) achieved approximation bounds for A-optimality via a different algorithm combining convex relaxation and a greedy method.
Other methods include leverage score sampling~\cite{ma2015statistical} and predictive length sampling~\cite{zhu2015optimal}.

\textbf{Low-stretch spanning trees and applications.} Objectives \ref{eq:opt} also arise in the construction of low-stretch spanning trees, which have important applications in graph sparsification, preconditioning and solving symmetric diagonally dominant (SDD) linear systems~\cite{spielman2004nearly}, among others \cite{elkin2008lower}. In the node-edge incidence matrix $\Pi \in \mathbb{R}^{n \times m}$ of an undirected graph $G$ with $n$ nodes and $m$ edges, the column corresponding to edge $(u,v)$ is $\sqrt{w(u,v)}(e_u - e_v)$. Let $\Pi = U\Sigma Y$ be the SVD of $\Pi$ with $Y\in\mathbb{R}^{n-1\times m}$. The stretch of a spanning tree $T$ in $G$ is then given by $St_T(G) = \|Y_T^{-1}\|_F^2$ \cite{avron2013faster}. In those applications, we hence search for a set of edges with low stretch. 

\textbf{Network controllability.} The problem of sampling $k\ge n$ columns in a matrix also arises in network controllability. For example, \citet{zhao16scheduling} consider selecting control nodes $S$ (under certain constraints) over time in complex networks to control a linear time-invariant network. After transforming the problem into a column subset selection problem from a short and wide controllability matrix, the objective becomes essentially an E-optimal design problem, for which the authors use greedy heuristics. 

\paragraph{Notation.}

From a matrix $A \in \mathbb{R}^{n \times m}$ with $m \gg n$ columns, we sample a set $S\subseteq [m]$ of $k$ columns ($n \leq k \leq m$), where $[m] := \{1,2,\ldots, m\}$. We denote the singular values of $A$ by $\{\sigma_i(A)\}_{i=1}^n$, in decreasing order. We will assume $A$ has full row rank $r(A)=n$, so $\sigma_n(A) > 0$. We also assume that $r(A_S) = r(A) = n$ for every $S\subseteq [m]$ where $|S|\ge n$. By $e_k(A)$, we denote the $k$-th elementary symmetric polynomial of $A$, i.e., the $k$-th coefficient of the characteristic polynomial $\det(\lambda I - A) = \sum_{j=0}^N(-1)^je_j(A)\lambda^{N-j}$.

\vspace*{-6pt}
\section{Polynomial-time Dual Volume Sampling}
\label{sec:epstrick}
\vspace*{-6pt}
We describe in this section our method to sample from the distribution $P(S;A)$. Our first method relies on the key insight that, as we show, the marginal probabilities for DVS can be computed in polynomial time. To demonstrate this, we begin with the partition function and then derive marginals.

\vspace*{-6pt}
\subsection{Marginals}
\vspace*{-6pt}
The partition function has a conveniently simple closed form, which follows from the Cauchy-Binet formula and was also derived in \citep{avron2013faster}.

\begin{lemma}[Partition Function~\citep{avron2013faster}]
  \label{lem:partition}
  For $A\in\mathbb{R}^{n\times m}$ with $r(A) = n$ and $n\le |S| = k\le m$, we have 
\begin{equation*}
Z_A := \nlsum_{|S| = k, S\subseteq [m]}\det(A_S A_S^\top) = \binom{m-n}{k-n}\det(A A^\top).
\end{equation*}
\end{lemma}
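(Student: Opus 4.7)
My plan is to prove the identity by a double application of the Cauchy--Binet formula combined with a swap of the order of summation. The key observation is that for $|S| = k \ge n$, the matrix $A_S A_S^\top$ is an $n \times n$ Gram matrix of a ``fat'' $n \times k$ block, and its determinant expands as a sum of squared minors indexed by the $n$-subsets of $S$.

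Concretely, I would first apply Cauchy--Binet to each term on the left:
\begin{equation*}
\det(A_S A_S^\top) \;=\; \sum_{T \subseteq S,\, |T| = n} \det(A_T)^2,
\end{equation*}
where $A_T$ is the square $n \times n$ submatrix of $A$ with columns indexed by $T$. Substituting into the sum defining $Z_A$ and exchanging the order of summation gives
\begin{equation*}
Z_A \;=\; \sum_{|S| = k} \sum_{T \subseteq S,\, |T| = n} \det(A_T)^2 \;=\; \sum_{|T| = n} \det(A_T)^2 \cdot \bigl|\{S : |S| = k,\; T \subseteq S\}\bigr|.
\end{equation*}

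The counting step is the next ingredient: once $T$ is fixed with $|T| = n$, the number of $k$-subsets $S \subseteq [m]$ containing $T$ equals $\binom{m-n}{k-n}$, since we must choose the remaining $k-n$ elements from the $m-n$ indices outside $T$. This factor is constant in $T$, so it pulls out of the sum. Finally, a second application of Cauchy--Binet (this time to the full $n \times m$ matrix $A$) yields $\sum_{|T| = n} \det(A_T)^2 = \det(A A^\top)$, completing the identity.

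There isn't really a ``hard part'' here; the argument is a clean combinatorial swap. The only thing to be slightly careful about is the hypothesis $k \ge n$, which ensures both that the inner sum over $T \subseteq S$ with $|T| = n$ is nonempty and that the binomial coefficient $\binom{m-n}{k-n}$ is the correct count. The assumption $r(A) = n$ is not needed for the identity itself (both sides are polynomial in the entries of $A$), but it does ensure that $Z_A > 0$ so that $P(S;A)$ is a genuine probability distribution.
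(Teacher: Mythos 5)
Your proof is correct. The double Cauchy--Binet expansion $\det(A_S A_S^\top) = \sum_{T \subseteq S, |T|=n} \det(A_T)^2$, the exchange of summation order, and the count $\binom{m-n}{k-n}$ of $k$-sets containing a fixed $n$-set all check out, and your remark that $r(A)=n$ is only needed for positivity of $Z_A$ (not for the identity itself) is accurate. However, your route differs from the one the paper actually writes down. The paper explicitly acknowledges that the lemma ``follows immediately by an application of the Cauchy--Binet identity'' --- essentially your argument --- but deliberately presents instead a perturbation proof: it writes $\det(A_S A_S^\top) = \lim_{\eps\to 0} \eps^{-(k-n)}\det(A_S^\top A_S + \eps I_k)$, recognizes the sum over $S$ as the elementary symmetric polynomial $e_k(A^\top A + \eps I_m)$, evaluates this via the singular values of $A$ (obtaining $\binom{m-n}{k-n}\prod_i(\sigma_i^2(A)+\eps) + O(\eps)$), and lets $\eps \to 0$. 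The point of that detour is that the same $\eps$-trick is the engine behind the much harder marginal computation in Theorem~\ref{thm:marginal}, so the partition-function proof doubles as a warm-up for the paper's main technical argument. Your version is the cleaner and more elementary proof of this lemma in isolation (and is essentially the same combinatorial identity as the paper's Prop.~\ref{prop:elsym}, specialized to $S = [m]$ and resummed), but it does not set up the perturbation machinery the paper reuses later.
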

Next, we will need the marginal probability $P(T \subseteq S;A) = \sum_{S: T \subseteq S} P(S;A)$ that a given set $T \subseteq [m]$ is a subset of the random set $S$. In the following theorem, the set $T_c = [m]\setminus T$ denotes the (set) complement of $T$, and $Q^\perp$ denotes the orthogonal complement of $Q$.

\begin{theorem}[Marginals]\label{thm:marginal}
Let $T\subseteq [m]$, $|T|\le k$, and $\eps > 0$. Let $A_T = Q \Sigma V^\top$ be the singular value decomposition of $A_T$ where $Q\in\mathbb{R}^{n\times r(A_T)}$, and $Q^\perp\in\mathbb{R}^{n\times (n - r(A_T))}$. Further define the matrices
\begin{align*}
B &= (Q^\perp)^\top A_{T_c}\in\mathbb{R}^{(n-r(A_T))\times (m-|T|)},\\ 
C &= \left[\begin{array}{ccc}
{1\over \sqrt{\sigma_1^2(A_T) + \eps}} & 0 & \ldots\\
0 & {1\over \sqrt{\sigma_2^2(A_T) + \eps}} & \ldots\\
\vdots & \vdots & \ddots
\end{array}\right]
Q^\top A_{T_c} \in\mathbb{R}^{r(A_T)\times (m-|T|)}.
\end{align*}
Let $Q_{B}\mathrm{diag}(\sigma_i^2(B)) Q_{B}^\top$ be the eigenvalue decomposition of $B^\top B$ where $Q_{B}\in\mathbb{R}^{|T_c|\times r(B)}$. Moreover, let $W^\top = \left[I_{T_c}; C^\top\right]$ and $\Gamma = e_{k-|T|-r(B)}(W((Q_{B}^\perp)^\top Q_{B}^\perp)W^\top)$. 
Then the marginal probability of $T$ in DVS is
\begin{align*}
P(T \subseteq S ; A)  
= {\left[\prod_{i=1}^{r(A_T)}\sigma_i^2(A_T)\right] \times \left[\prod_{j=1}^{r(B)}\sigma_j^2(B)\right] \times \Gamma \over Z_A}.
\end{align*}
\end{theorem}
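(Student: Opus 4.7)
The plan is to compute the unnormalized marginal $\nlsum_{S:\,T\subseteq S,\,|S|=k}\det(A_S A_S^\top)$ in closed form and divide by $Z_A$ from Lemma~\ref{lem:partition}. Write each admissible $S$ as $S=T\cup S'$ with $S'\subseteq T_c$ of size $k-|T|$, so that $A_S A_S^\top = A_T A_T^\top + A_{S'}A_{S'}^\top$ invites a rank-update identity. Because $A_T A_T^\top$ may be singular, I would regularize with $M_\eps := A_T A_T^\top + \eps I_n$, apply the Weinstein--Aronszajn (matrix determinant) lemma
\begin{equation*}
\det\!\bigl(M_\eps + A_{S'}A_{S'}^\top\bigr) = \det(M_\eps)\,\det\!\bigl(I + A_{S'}^\top M_\eps^{-1} A_{S'}\bigr),
\end{equation*}
and expand $M_\eps^{-1}$ in the orthonormal basis $[Q,Q^\perp]$ provided by the SVD of $A_T$: the $Q$-block of $M_\eps^{-1}$ is $\mathrm{diag}\bigl(1/(\sigma_i^2(A_T)+\eps)\bigr)$ and the $Q^\perp$-block is $\eps^{-1}I$. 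This converts $A_{S'}^\top M_\eps^{-1} A_{S'}$ into $C_{S'}^\top C_{S'} + \eps^{-1}B_{S'}^\top B_{S'}$, recovering precisely the matrices $B$ and $C$ of the statement.

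Next I would sum over $S'$. Setting $N_\eps := I + C^\top C + \eps^{-1} B^\top B$, the minor-sum identity $e_j(N_\eps) = \nlsum_{|S'|=j}\det((N_\eps)_{S',S'})$, combined with $(N_\eps)_{S',S'} = I + C_{S'}^\top C_{S'} + \eps^{-1}B_{S'}^\top B_{S'}$, yields
\begin{equation*}
\nlsum_{S'\subseteq T_c,\,|S'|=k-|T|}\det\!\bigl(M_\eps + A_{S'}A_{S'}^\top\bigr) = \det(M_\eps)\cdot e_{k-|T|}(N_\eps).
\end{equation*}
Letting $\eps\to 0^+$, the left side tends to the target marginal sum. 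On the right, $\det(M_\eps) = \eps^{\,n-r(A_T)}\prod_i(\sigma_i^2(A_T)+\eps)$ vanishes to order $\eps^{\,n-r(A_T)}$ with leading coefficient $\prod_i\sigma_i^2(A_T)$, so I must isolate the coefficient of $\eps^{-(n-r(A_T))}$ in $e_{k-|T|}(N_\eps)$. Re-expressing that elementary symmetric polynomial as $\nlsum_{|R|=k-|T|}\det\!\bigl(\tilde X_R \tilde X_R^\top\bigr)$ via two applications of Cauchy--Binet on $\tilde X := \bigl[\begin{smallmatrix}\sqrt\eps\,I_{T_c}\\ \sqrt\eps\,C\\ B\end{smallmatrix}\bigr]$, every row in the first two blocks carries a factor $\sqrt\eps$ while the rows of $B$ carry none. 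Since $B$ has only $n-r(A_T)$ rows, selections with fewer than $n-r(A_T)$ rows of $B$ retain positive $\eps$-powers and vanish in the limit, while more is combinatorially impossible; a short check using $r(A)=n$ gives $r(B)=n-r(A_T)$, so the maximal $B$-selection is full row rank and $BB^\top$ is invertible.

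What remains is $\prod_i\sigma_i^2(A_T)\cdot\nlsum_{|R|=K}\det\!\bigl(M'_R(M'_R)^\top\bigr)$ where $M'_R = \bigl[\begin{smallmatrix}W_R\\ B\end{smallmatrix}\bigr]$, $W=\bigl[\begin{smallmatrix}I_{T_c}\\ C\end{smallmatrix}\bigr]$ is the matrix in the statement, and $K=k-|T|-r(B)$. Invoking the Schur complement (valid because $BB^\top$ is invertible),
\begin{equation*}
\det\!\bigl(M'_R(M'_R)^\top\bigr) = \det(BB^\top)\cdot\det\!\bigl(W_R(I-Q_B Q_B^\top)W_R^\top\bigr),
\end{equation*}
so the prefactor is $\det(BB^\top)=\prod_{j=1}^{r(B)}\sigma_j^2(B)$ and the residual sum $\nlsum_{|R|=K}\det\!\bigl((W P^\perp W^\top)_{R,R}\bigr)$ with $P^\perp := I - Q_B Q_B^\top = Q_B^\perp (Q_B^\perp)^\top$ equals $e_K(W P^\perp W^\top) = \Gamma$, again by the minor-sum formula. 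Collecting the three factors and dividing by $Z_A$ gives the theorem. The main obstacle I anticipate is the $\eps$-power bookkeeping in the middle step: one must verify simultaneously that no divergent $\eps^{-c}$ contributions with $c>n-r(A_T)$ arise (ruled out by the row count of $B$) and that sub-leading $c<n-r(A_T)$ terms vanish in the limit, so that the Schur complement can be applied cleanly to a full-row-rank $B$ at the end.
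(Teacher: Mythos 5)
Your proposal is correct and follows essentially the same route as the paper's proof in Appendix~\ref{app:sec:marginal}: the same $\eps$-perturbation bridging to a Gram-matrix computation, the same SVD-based split of the perturbed inverse into the $C$ piece (range of $A_T$) and the $\eps^{-1}B$ piece (orthogonal complement), the same conversion of the sum over $S'$ into an elementary symmetric polynomial of one large matrix, the same $\eps$-power bookkeeping keyed to the $n-r(A_T)$ rows of $B$, and the same final Schur complement extracting $\det(BB^\top)$ and $\Gamma$. The only difference is cosmetic: you perturb $A_TA_T^\top$ on the $n\times n$ side and invoke the matrix determinant lemma, whereas the paper perturbs $A_SA_S^\top$, passes to the $k\times k$ Gram matrix $A_S^\top A_S+\eps I_k$ (picking up an $\eps^{n-k}$ prefactor), and Schur-complements out the $T$-block --- the two starting points are related by $\det(A_TA_T^\top+\eps I_n)=\eps^{\,n-|T|}\det(A_T^\top A_T+\eps I_{|T|})$ and lead to identical downstream algebra.
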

We prove Theorem~\ref{thm:marginal} via a perturbation argument that connects DVS to volume sampling. Specifically, observe that for $\epsilon > 0$ and $|S| \geq n$ it holds that
\begin{align}
  \label{eq:eps}
  \det(A_S A_S^\top + \eps I_n) = \eps^{n-k}\det(A_S^\top A_S + \eps I_k) = \eps^{n-k}\det\left(\comb{S}^\top \comb{S}\right).
\end{align}
Carefully letting $\epsilon \rightarrow 0$ bridges volumes with ``dual'' volumes. The technical remainder of the proof further relates this equality to singular values, and exploits properties of characteristic polynomials. A similar argument yields an alternative proof of Lemma~\ref{lem:partition}. We show the proofs in detail in Appendix~\ref{app:sec:partition} and \ref{app:sec:marginal} respectively.

\paragraph{Complexity.} The numerator of $P(T \subseteq S; A)$ in Theorem~\ref{thm:marginal} requires $\co(mn^2)$ time to compute the first term, $\co(mn^2)$ to compute the second and $\co(m^3)$ to compute the third. The denominator takes $\co(mn^2)$ time, amounting in a total time of $\co(m^3)$ to compute the marginal probability.

\vspace*{-6pt}
\subsection{Sampling}
\label{sec:dualvol}
\vspace*{-6pt}
The marginal probabilities derived above directly yield a polynomial-time \emph{exact} DVS algorithm. Instead of $k$-sets, we sample ordered $k$-tuples $\overrightarrow{S} = (s_1,\ldots, s_k)\in[m]^k$. We denote the $k$-tuple variant of the DVS distribution by $\overrightarrow{P}(\cdot;A)$: 
\begin{align*}
  \overrightarrow{P}( (s_j=i_j)_{j=1}^k;A) &= \frac{1}{k!}P(\{i_1,\ldots,i_k\};A) = \prod\nolimits_{j=1}^k \overrightarrow{P}(s_j = i_j | s_1=i_1,\ldots,s_{j-1}=i_{j-1}; A).
\end{align*}
Sampling $\overrightarrow{S}$ is now straightforward. At the $j$th step we sample $s_j$ via $\overrightarrow{P}(s_j = i_j | s_1=i_1,\ldots,s_{j-1}=i_{j-1} ; A)$; these probabilities are easily obtained from the marginals in Theorem~\ref{thm:marginal}.
\begin{corollary}\label{coro:cond}
Let $T = \{i_1,\ldots, i_{t-1}\}$, and $P(T \subseteq S; A)$ as in Theorem~\ref{thm:marginal}. Then, 
\begin{align*}
  \overrightarrow{P}(s_t = i; A | s_1 = i_1,\ldots, s_{t-1} = i_{t-1}) =
  \frac{ P(T \cup \{i\} \subseteq S; A) }{(k-t+1)\; P(T \subseteq S; A)}.
\end{align*}
As a result, it is possible to draw an exact dual volume sample in time $\co(km^4)$.
\end{corollary}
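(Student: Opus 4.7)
The plan is to derive the conditional probability by unrolling the definition of $\overrightarrow{P}$ and then bounding the sampling cost by counting marginal evaluations. The crux is a simple symmetry observation: since $P(S;A)$ depends only on the underlying set, the joint probability of a prefix $(s_1,\ldots,s_t) = (i_1,\ldots,i_t)$ under $\overrightarrow{P}$ is obtained by summing $\overrightarrow{P}$ over all completions $(i_{t+1},\ldots,i_k)$ to an ordered $k$-tuple of distinct indices. Each set $S\supseteq T := \{i_1,\ldots,i_t\}$ of size $k$ contributes $(k-t)!$ such completions (one for each ordering of $S\setminus T$), and each contributes $P(S;A)/k!$. Thus
\begin{equation*}
\overrightarrow{P}(s_1 = i_1,\ldots,s_t = i_t; A) \;=\; \frac{(k-t)!}{k!}\,P(T\subseteq S; A).
\end{equation*}

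With this identity in hand, the first claim follows by taking the ratio of the joint probabilities at steps $t$ and $t-1$: the factorial prefactors collapse to $(k-t)!/(k-t+1)! = 1/(k-t+1)$, leaving exactly
\begin{equation*}
\overrightarrow{P}(s_t = i\mid s_1=i_1,\ldots,s_{t-1}=i_{t-1}; A) \;=\; \frac{P(T\cup\{i\}\subseteq S; A)}{(k-t+1)\,P(T\subseteq S; A)},
\end{equation*}
which is the stated formula. Because Theorem~\ref{thm:marginal} provides $P(T\subseteq S; A)$ in closed form, each such conditional is explicitly computable, and the probabilities are automatically nonnegative and sum to $1$ over the $m - t + 1$ remaining indices (this last check is a direct consequence of the marginalization identity above).

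For the complexity bound, I would just account for the work per iteration. At step $t$, we must evaluate the conditional for each candidate $i\notin T$, which amounts to computing $P(T\cup\{i\}\subseteq S; A)$ for up to $m$ values of $i$; by the discussion following Theorem~\ref{thm:marginal}, each such marginal costs $\co(m^3)$, so one sampling step costs $\co(m^4)$. Repeating for $k$ steps yields total time $\co(km^4)$, as claimed.

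The combinatorial identity and the ratio are routine, so I do not expect a genuine obstacle; the only place to be careful is ensuring the counting of completions (the $(k-t)!$ factor) and confirming that the $\overrightarrow{P}$ probabilities induced by this chain rule genuinely reconstruct $P(S;A)/k!$ on ordered tuples — which follows immediately because summing the product of conditionals telescopes back to $\overrightarrow{P}(s_1=i_1,\ldots,s_k=i_k; A) = P(\{i_1,\ldots,i_k\}; A)/k!$.
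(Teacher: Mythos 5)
Your proposal is correct and follows the same route the paper intends: unroll the definition of $\overrightarrow{P}$, count the $(k-t)!$ orderings of each completion to get $\overrightarrow{P}(s_1=i_1,\ldots,s_t=i_t;A)=\tfrac{(k-t)!}{k!}P(\{i_1,\ldots,i_t\}\subseteq S;A)$, take the ratio of consecutive prefix probabilities, and charge $\co(m)$ marginal evaluations at $\co(m^3)$ each per step over $k$ steps. The counting of completions and the normalization check are exactly the points that matter, and you handle both; the only implicit assumption (shared with the paper's statement) is that the formula is applied only to $i\notin T$, since tuples with repeated indices have probability zero.
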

The full proof may be found in the appendix. The running time claim follows since the sampling algorithm invokes $\co(mk)$ computations of marginal probabilities, each costing $\co(m^3)$ time.

\paragraph{Remark} A potentially more efficient approximate algorithm could be derived by noting the relations between volume sampling and DVS. Specifically, we add a small perturbation to DVS as in Equation~\ref{eq:eps} to transform it into a volume sampling problem, and apply random projection for more efficient volume sampling as in~\cite{deshpande2010efficient}. Please refer to Appendix~\ref{app:sec:approx} for more details.

\subsection{Derandomization}
\label{sec:derand}
Next, we derandomize the above sampling algorithm to \emph{deterministically} select a subset that satisfies the bound~\eqref{eq:bounds} for the Frobenius norm, thereby answering another question in \citep{avron2013faster}. 
The key insight for derandomization is that conditional expectations can be computed in polynomial time, given the marginals in Theorem~\ref{thm:marginal}:
\begin{corollary}\label{cor:conditional} Let $(i_1,\ldots,i_{t-1})\in[m]^{t-1}$ be such that the marginal distribution satisfies $\overrightarrow{P}(s_1=i_1,\ldots, s_{t-1} = i_{t-1};A) > 0$. The conditional expectation can be expressed as
  \begin{align*}
    \mathbb{E}\left[ \|A_S^\dagger \|_F^2 \mid s_1 = i_1,\ldots, s_{t-1}=i_{t-1}\right] = \frac{\sum_{j=1}^n P'( \{i_1, \ldots, i_{t-1}\} \subseteq S | S\sim P(S;A_{[n]\setminus \{j\}}))}{ P'( \{i_1, \ldots, i_{t-1}\} \subseteq S | S\sim P(S;A))},
  \end{align*}
  where $P'$ are the unnormalized marginal distributions, and it can be computed in $\co(nm^3)$ time.
\end{corollary}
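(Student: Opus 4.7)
The plan is to reduce the conditional expectation to a ratio of unnormalized marginals by invoking a classical trace-determinant identity, and then cite Theorem~\ref{thm:marginal} for both correctness and runtime of each individual marginal computation.

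First I would marginalize out the order. Writing $T=\{i_1,\ldots,i_{t-1}\}$, by the symmetry $\overrightarrow{P}(\cdot;A) = \frac{1}{k!}P(\{s_1,\ldots,s_k\};A)$ and a straightforward counting argument, summing over the $(k-t+1)!$ orderings of the remaining coordinates gives
\[
\overrightarrow{P}(s_1{=}i_1,\ldots,s_{t-1}{=}i_{t-1};A) \;=\; \tfrac{(k-t+1)!}{k!}\,P(T\subseteq S;A),
\]
so that
\[
\mathbb{E}\!\left[\|A_S^\dagger\|_F^2 \mid s_1{=}i_1,\ldots,s_{t-1}{=}i_{t-1}\right]
\;=\; \frac{\sum_{S\supseteq T,\,|S|=k} \|A_S^\dagger\|_F^2\,P(S;A)}{P(T\subseteq S;A)}.
\]
This step uses nothing beyond the definition of $\overrightarrow{P}$ and needs only careful bookkeeping of which indices appear in how many orderings.

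Next I would rewrite the numerator. The key algebraic fact is that for any invertible symmetric $M\in\mathbb{R}^{n\times n}$,
\[
\det(M)\,\mathrm{tr}(M^{-1}) \;=\; \sum_{j=1}^n \det\!\bigl(M^{(j,j)}\bigr),
\]
where $M^{(j,j)}$ is $M$ with row and column $j$ deleted; this follows from expanding the adjugate. Applied to $M=A_SA_S^\top$, noting that $(A_SA_S^\top)^{(j,j)}=A_{[n]\setminus\{j\},S}A_{[n]\setminus\{j\},S}^\top$ and that $\|A_S^\dagger\|_F^2=\mathrm{tr}((A_SA_S^\top)^{-1})$, one obtains
\[
\|A_S^\dagger\|_F^2\,\det(A_SA_S^\top) \;=\; \sum_{j=1}^n \det\!\bigl(A_{[n]\setminus\{j\},S}A_{[n]\setminus\{j\},S}^\top\bigr).
\]
Plugging in $P(S;A)=\det(A_SA_S^\top)/Z_A$, swapping the sums over $j$ and $S$, and recognizing the inner sum as the unnormalized marginal
\(
P'(T\subseteq S\mid S\sim P(\cdot;A_{[n]\setminus\{j\}})) = \sum_{S\supseteq T}\det(A_{[n]\setminus\{j\},S}A_{[n]\setminus\{j\},S}^\top),
\)
and likewise the denominator as $P'(T\subseteq S\mid S\sim P(\cdot;A))/Z_A$, the factor $Z_A$ cancels and the claimed identity drops out.

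For the complexity, each quantity $P'(T\subseteq S\mid S\sim P(\cdot;M))$ is, up to the normalizer $Z_M$ (which Lemma~\ref{lem:partition} gives in closed form), exactly a marginal of the form handled by Theorem~\ref{thm:marginal}, hence computable in $\co(m^3)$ time. There are $n$ numerator terms plus one denominator, giving $\co(nm^3)$ overall. The only mildly delicate point in the proof is ensuring the trace-determinant identity remains valid across all $S\supseteq T$ with $|S|=k$; since we have assumed $r(A_S)=n$ whenever $|S|\ge n$, $A_SA_S^\top$ is positive definite and the identity applies without degeneracy, so I do not expect a real obstacle here — the proof is essentially a direct computation once the trace-determinant identity is invoked.
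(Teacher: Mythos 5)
Your proposal is correct and follows essentially the same route as the paper's proof: reduce the conditional expectation to a ratio of set-indexed sums, apply the adjugate identity $\det(M)\,\mathrm{tr}(M^{-1})=\sum_{j}\det(M^{(j,j)})$ with $M=A_SA_S^\top$ (which the paper uses implicitly in its chain of equalities), and recognize numerator and denominator as unnormalized marginals computable via Theorem~\ref{thm:marginal}. Your version merely makes the trace-determinant step and the cancellation of $Z_A$ explicit, which the paper leaves tacit.
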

We show the full derivation in Appendix~\ref{app:sec:derand}.

Corollary~\ref{cor:conditional} enables a greedy derandomization procedure. Starting with the empty tuple 
$\overrightarrow{S}_0 = \emptyset$, in the $i$th iteration, we greedily select $j^* \in \argmax_j\, \mathbb{E}[\|A^\dagger_{S \cup j}\|^2_F \mid (s_1, \ldots, s_i) = \overrightarrow{S}_{i-1} \circ j]$ and append it to our selection: $\overrightarrow{S}_i = \overrightarrow{S}_{i-1} \circ j$. The final set is the non-ordered version $S_k$ of $\overrightarrow{S}_k$. Theorem~\ref{thm:derandbounds} shows that this greedy procedure succeeds, and implies a deterministic version of the bound~(\ref{eq:bounds}).

\begin{theorem}\label{thm:derandbounds} The greedy derandomization selects a column set $S$ satisfying
\begin{align*}
\|A_S^\dagger\|_F^2 \le {m - n + 1\over k-n+1}\|A^\dagger\|_F^2;\quad 
\|A_S^\dagger\|_2^2 \le {n(m-n+1)\over k-n+1} \|A^\dagger\|_2^2.
\end{align*}
\end{theorem}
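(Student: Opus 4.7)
My plan is the classical method of conditional expectations, anchored on the randomized bound $\mathbb{E}[\|A_S^\dagger\|_F^2] \le \frac{m-n+1}{k-n+1}\|A^\dagger\|_F^2$ from~\eqref{eq:bounds} and made algorithmic by the closed form for conditional expectations in Corollary~\ref{cor:conditional}. The key invariant, proved by induction on $t$, is
\[
\mathbb{E}\left[\|A_S^\dagger\|_F^2 \mid s_1=i_1,\ldots,s_t=i_t\right] \le \frac{m-n+1}{k-n+1}\|A^\dagger\|_F^2,
\]
where $(i_1,\ldots,i_t)$ is the prefix chosen by the greedy procedure. The base case $t=0$ is precisely~\eqref{eq:bounds}. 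For the inductive step, the tower property expresses the conditional expectation at depth $t-1$ as a convex combination (with weights $\overrightarrow{P}(s_t=j\mid s_1=i_1,\ldots,s_{t-1}=i_{t-1})$) of its values at depth $t$; hence some $j$ with positive weight achieves a value no larger than the left-hand side, and the greedy rule, which selects the $j$ minimizing the conditional expectation (computed via Corollary~\ref{cor:conditional}), picks such a $j$ and preserves the invariant. After $k$ rounds the conditioning is total, so the conditional expectation collapses to the realized value $\|A_S^\dagger\|_F^2$, establishing the first inequality of the theorem.

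For the spectral-norm bound I would avoid a second derandomization loop and instead chain elementary norm relations with the Frobenius result just proved:
\[
\|A_S^\dagger\|_2^2 \;\le\; \|A_S^\dagger\|_F^2 \;\le\; \frac{m-n+1}{k-n+1}\|A^\dagger\|_F^2 \;\le\; \frac{n(m-n+1)}{k-n+1}\|A^\dagger\|_2^2,
\]
where the last step uses $\|A^\dagger\|_F^2 = \sum_{i=1}^{n}\sigma_i(A)^{-2} \le n\,\sigma_n(A)^{-2} = n\|A^\dagger\|_2^2$. This recovers the factor $n(m-n+1)/(k-n+1)$ stated in the theorem (weaker than the randomized E-optimal bound in~\eqref{eq:bounds}, as expected since we are piggy-backing on the Frobenius derandomization).

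The main delicate point is ensuring that the greedy procedure always has a legal move: at every step we need some $j$ with $\overrightarrow{P}(s_1=i_1,\ldots,s_{t-1}=i_{t-1},s_t=j;A) > 0$ so the conditional expectation in Corollary~\ref{cor:conditional} is well defined and the convex-combination argument has positive weight on the chosen $j$. This falls out of the induction itself: the prefix probability is positive by hypothesis and equals the sum over $j$ of the length-$t$ prefix probabilities, so at least one $j$ has positive weight; restricting the $\argmin$ to those $j$ is enough. Apart from this bookkeeping the argument is mechanical — the real content lies in the marginal formula of Theorem~\ref{thm:marginal} and its specialization Corollary~\ref{cor:conditional}, both already established.
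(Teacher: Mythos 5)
Your proposal is correct and follows essentially the same route as the paper's proof: the method of conditional expectations anchored on the randomized bound~\eqref{eq:bounds}, the tower-property/convex-combination induction showing the greedy choice preserves the invariant, and the identical chain $\|A_S^\dagger\|_2^2 \le \|A_S^\dagger\|_F^2 \le \frac{m-n+1}{k-n+1}\|A^\dagger\|_F^2 \le \frac{n(m-n+1)}{k-n+1}\|A^\dagger\|_2^2$ for the spectral bound. Your remark about ensuring a legal move with positive prefix probability is a point the paper's proof leaves implicit, so it is a welcome (if minor) addition rather than a deviation.
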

In the proof, we construct a greedy algorithm. In each iteration, the algorithm computes, for each column that has not yet been selected, the expectation conditioned on this column being included in the current set.
Then it chooses the element with the lowest conditional expectation to actually be added to the current set. This greedy inclusion of elements will only decrease the conditional expectation, thus retaining the bound in Theorem~\ref{thm:derandbounds}. The detailed proof is deferred to Appendix~\ref{app:sec:greedy}.

\textbf{Complexity.} Each iteration of the greedy selection requires $\co(nm^3)$ to compute $\co(m)$ conditional expectations. Thus, the total running time for $k$ iterations is $\co(knm^4)$. 
The approximation bound for the spectral norm is slightly worse than that in~\eqref{eq:bounds}, but is of the same order if $k=\co(n)$.

\section{Strong Rayleigh Property and Fast Markov Chain Sampling}
\label{sec:mcmc}
Next, we investigate DVS more deeply and discover that it possesses a remarkable structural property,  namely,  the \emph{Strongly Rayleigh (SR)} \citep{borcea2009negative} property. This property has proved remarkably fruitful in a variety of recent contexts, including recent progress in approximation algorithms~\citep{gharan11}, fast sampling~\citep{anari15,li16nips}, graph sparsification~\citep{frieze14,spielmanSri11}, extensions to the Kadison-Singer problem~\citep{anari2014},  and certain concentration of measure results~\citep{pemantle14}, among others.

For DVS, the SR property has two major consequences: it leads to a fast mixing practical MCMC sampler, and it implies results on concentration of measure.

\paragraph{Strongly Rayleigh measures.} SR measures were introduced in the landmark paper of~\citet{borcea2009negative}, who develop a rich theory of negatively associated measures. In particular, we say that a probability measure $\mu: 2^{[n]}\to \reals_+$ is \emph{negatively associated} if $\int Fd\mu\int Gd\mu \ge \int FGd\mu$ for $F, G$ increasing functions on $2^{[n]}$ with \emph{disjoint} support. This property reflects a ``repelling'' nature of $\mu$, a property that occurs more broadly across probability, combinatorics, physics, and other fields---see \citep{pemantle00,borcea2009negative,wagner2011} and references therein. The negative association property turns out to be quite subtle in general; the class of SR measures captures a strong notion of negative association and provides a framework for analyzing such measures. 

Specifically, SR measures are defined via their connection to real stable polynomials~\citep{pemantle00,borcea2009negative,wagner2011}. A multivariate polynomial $f\in\mathbb{C}[z]$ where $z\in\mathbb{C}^m$ is called \emph{real stable} if all its coefficients are real and $f(z)\ne 0$ whenever $\mathfrak{Im}(z_i) > 0$ for $1\le i\le m$. A measure is called an \emph{SR measure} if its multivariate generating polynomial $f_\mu(z) := \nlsum_{S\subseteq [n]} \mu(S)\prod_{i\in S}z_i$ is real stable. Notable examples of SR measures are Determinantal Point Processes~\citep{macchi1975,lyons03,borodin,kuleszaBook}, balanced matroids~\citep{feder1992balanced,pemantle14}, Bernoullis conditioned on their sum, among others. It is known (see~\citep[pg.~523]{borcea2009negative}) that the class of SR measures is exponentially larger than the class of determinantal measures.

\subsection{Strong Rayleigh Property of DVS}
Theorem~\ref{thm:sr} establishes the SR property for DVS and is the main result of this section.
Here and in the following, we use the notation $z^S = \prod_{i\in S}z_i$.
\begin{theorem}
  \label{thm:sr}
  Let $A\in \mathbb{R}^{n\times m}$ and $n\le k\le m$. Then the multiaffine polynomial
  \begin{equation}
    \label{eq:8}
    p(z) := \sum_{|S|=k, S\subseteq [m]} \det(A_S A_S^\top)\prod_{i\in S}z_i\quad=\quad \sum_{|S|=k, S\subseteq [m]} \det(A_S A_S^\top)z^{S},
  \end{equation}
  is real stable. Consequently, $P(S;A)$ is an SR measure. 
\end{theorem}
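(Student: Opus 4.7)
The plan is to realize $p(z)$ as an $\eps \downarrow 0$ limit of generating polynomials of $k$-DPPs on a regularized PSD matrix, using the determinantal identity underlying Equation~\ref{eq:eps} to bridge DVS and volume sampling, and then applying Hurwitz's theorem together with standard closure properties of real stability.

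First I would introduce the positive definite matrix $L_\eps := A^\top A + \eps I_m$ (for $\eps > 0$) and the multiaffine polynomial
\begin{equation*}
q_\eps(z) \;:=\; \sum_{|S|=k} \det\bigl((L_\eps)_S\bigr)\, z^S \;=\; \sum_{|S|=k} \det(A_S^\top A_S + \eps I_k)\, z^S.
\end{equation*}
The identity behind Equation~\ref{eq:eps} gives $\det(A_S^\top A_S + \eps I_k) = \eps^{k-n}\det(A_S A_S^\top + \eps I_n)$, hence $\eps^{n-k} q_\eps(z) = \sum_{|S|=k}\det(A_S A_S^\top + \eps I_n)\, z^S$, which converges coefficient-wise to $p(z)$ as $\eps \to 0^+$.

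The crux is to show $q_\eps(z)$ is real stable. Consider the two-variable polynomial
\begin{equation*}
G(z, w) \;:=\; \det\bigl(wI_m + L_\eps\,\mathrm{diag}(z)\bigr) \;=\; \det(L_\eps)\,\det\Bigl(\sum_{i=1}^m z_i\, e_i e_i^\top + w L_\eps^{-1}\Bigr),
\end{equation*}
where the second equality factors $L_\eps$ out and uses $\mathrm{diag}(z) = \sum_i z_i e_i e_i^\top$. The right-hand side is the determinant of a positive linear combination of Hermitian PSD matrices (with $L_\eps^{-1}$ positive definite), so for $\mathrm{Im}(z_i)>0$ and $\mathrm{Im}(w)>0$ the argument has positive definite imaginary part and is therefore nonsingular; hence $G$ is real stable in $(z, w)$ by the Borcea--Branden PSD-combination criterion. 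Expanding via the characteristic polynomial of $-L_\eps\,\mathrm{diag}(z)$ yields
\begin{equation*}
G(z, w) \;=\; \sum_{j=0}^m q_{\eps,j}(z)\, w^{m-j}, \qquad q_{\eps,j}(z) := \sum_{|S|=j}\det\bigl((L_\eps)_S\bigr)\, z^S,
\end{equation*}
so $q_\eps(z) = q_{\eps,k}(z) = \tfrac{1}{(m-k)!}\,\partial_w^{m-k}\, G(z, w)\big|_{w=0}$. Since $\partial_w$ and the specialization $w \mapsto 0$ each preserve real stability (provided the result is not identically zero), and since $L_\eps$ is full rank (so every coefficient $\det((L_\eps)_S)$ is strictly positive and $q_\eps \neq 0$), real stability of $G$ descends to $q_\eps$, and then to the positive scalar multiple $\eps^{n-k} q_\eps$.

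Finally, as $\eps \to 0^+$, the real stable polynomials $\eps^{n-k} q_\eps(z)$ converge coefficient-wise, hence uniformly on compact subsets of $\mathbb{C}^m$ (their degree is uniformly bounded by $k$), to $p(z)$. By Hurwitz's theorem, $p$ is either real stable or identically zero. To rule out the latter, pick $T\subseteq [m]$ with $|T|=n$ and $\det(A_T)\neq 0$ (guaranteed by the full-row-rank assumption on $A$), extend arbitrarily to $S\supseteq T$ with $|S|=k$, and apply Cauchy--Binet to obtain $\det(A_S A_S^\top) \ge \det(A_T)^2 > 0$, so some coefficient of $p$ is strictly positive. Therefore $p$ is real stable, and since $p/Z_A$ is by definition the multiaffine generating polynomial of $P(S;A)$ (with $Z_A$ from Lemma~\ref{lem:partition}), the measure is SR. The main obstacle is the coefficient-extraction/limit machinery: both the chain $\partial_w^{m-k}\big|_{w=0}$ and the $\eps \downarrow 0$ limit preserve real stability only when the result is not identically zero, and the rank-based nonvanishing check has to be invoked at both the finite-$\eps$ and limiting stages.
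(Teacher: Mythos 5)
Your proof is correct, but it takes a genuinely different route from the paper's. The paper works with the complement polynomial $p_c(z)=\sum_{|S|=k}\det(A_SA_S^\top)z^{S_c}$, starting from the $2m$-variable stable polynomial $\det(L+W+Z)$, and needs three nontrivial ingredients: the identity $\det(A_SA_S^\top)=e_n(L_{S,S})$ (Prop.~\ref{prop:elsym}), the fact that degree truncation preserves stability (Corollary~4.18 of \citealp{borcea2009negative}), and the inversion operation of Prop.~\ref{prop:basic}-(iv) to return from $p_c$ to $p$; the whole argument is exact, with no limiting process. You instead regularize to $L_\eps=A^\top A+\eps I_m$, encode all $k$-subset principal minors of $L_\eps$ as the $w^{m-k}$-coefficient of the single $(m+1)$-variable polynomial $\det(wI_m+L_\eps\,\mathrm{diag}(z))$ (stable by the same Borcea--Br\"and\'en determinant criterion, Prop.~\ref{prop:det}, after factoring out $\det(L_\eps)$), extract that coefficient using only differentiation and real substitution, and then pass to the limit $\eps\downarrow 0$ via Hurwitz. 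This avoids the truncation corollary, the complement/inversion detour, and Prop.~\ref{prop:elsym} (whose role is played by identity~\eqref{eq:eps}), at the price of an extra analytic layer: one must check nonvanishing both at finite $\eps$ (immediate from $L_\eps\succ 0$) and in the limit (your Cauchy--Binet argument), and invoke the multivariate Hurwitz theorem for locally uniform limits of stable polynomials --- you handle all of these correctly, including the uniform degree bound that upgrades coefficient-wise to locally uniform convergence. In short, the paper's argument is limit-free and stays entirely within closure operations, while yours uses lighter closure machinery and makes the bridge to volume sampling on the regularized kernel explicit; both are valid proofs.
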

The proof of Theorem~\ref{thm:sr} relies on key properties of real stable polynomials and SR measures established in~\citep{borcea2009negative}. Essentially, the proof demonstrates that the generating polynomial of $\overline{P}(S_c;A)$ can be obtained by applying a few carefully chosen stability preserving operations to a polynomial that we know to be real stable. Stability, although easily destroyed, is closed under several operations noted in the important proposition below.
\begin{prop}[Prop.~2.1~\citep{borcea2009negative}]
   \label{prop:basic}
   Let $f : \mathbb{C}^m \to \mathbb{C}$ be a stable polynomial. The following properties preserve stability:
   \begin{inparaenum}[(i)]
   \item \textbf{Substitution}: $f(\mu,z_2,\ldots,z_m)$ for $\mu \in \reals$;
   \item \textbf{Differentiation}: $\partial^S f(z_1,\ldots,z_m)$ for any $S\subseteq [m]$;
   \item \textbf{Diagonalization}: $f(z,z,z_3\ldots,z_m)$ is stable, and hence $f(z,z,\ldots,z)$; and
   \item \textbf{Inversion}: $z_1\cdots z_n f(z_1^{-1},\ldots,z_n^{-1})$.
   \end{inparaenum}
\end{prop}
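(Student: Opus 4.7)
My plan is to handle each of the four stability-preserving operations by matching it to a tool well suited for the job: direct evaluation for diagonalization (iii), the Gauss--Lucas theorem for differentiation (ii), Hurwitz's theorem for substitution (i), and real-coefficient conjugate symmetry for inversion (iv). I would attack them in roughly increasing order of subtlety, namely (iii), (ii), (i), (iv).

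For (iii), the claim is essentially immediate from the definition: if $\mathfrak{Im}(z) > 0$ and $\mathfrak{Im}(z_j) > 0$ for $j \ge 3$, then the point $(z, z, z_3, \ldots, z_m)$ lies in the product upper half-plane, so $f$ is nonzero there; iterating this merging of coordinates handles the full diagonal $f(z, z, \ldots, z)$. For (ii), it suffices to show $\partial_1 f$ is stable and then iterate. Freeze $z_2, \ldots, z_m$ with positive imaginary parts and regard $g(z_1) := f(z_1, z_2, \ldots, z_m)$ as a univariate polynomial. By stability of $f$, every root of $g$ lies in the closed lower half-plane. The Gauss--Lucas theorem then places the roots of $g'$ in the convex hull of the roots of $g$, which is still contained in the closed lower half-plane. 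Hence $\partial_1 f(z_1, z_2, \ldots, z_m) \ne 0$ whenever all $\mathfrak{Im}(z_j) > 0$, giving stability of $\partial_1 f$.

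For (i), I would use a Hurwitz limiting argument. Consider the family $f_\epsilon(z_2, \ldots, z_m) := f(\mu + i\epsilon, z_2, \ldots, z_m)$ for $\epsilon > 0$. Since $\mu + i\epsilon$ has positive imaginary part, stability of $f$ forces $f_\epsilon$ to be nonvanishing on the product upper half-plane in the remaining variables. The coefficients of $f_\epsilon$ converge to those of $f(\mu, z_2, \ldots, z_m)$ as $\epsilon \to 0^+$, yielding uniform convergence on compacta. A coordinate-wise application of the classical Hurwitz theorem (freezing all but one variable, with that variable ranging in the upper half-plane) forces the limit to be either identically zero or nonvanishing on the product upper half-plane. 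Adopting the convention that the zero polynomial is stable---standard throughout this literature---completes the case.

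For (iv), the key is the conjugate symmetry $f(\bar{w}) = \overline{f(w)}$ enjoyed by any real-coefficient polynomial: it upgrades real stability of $f$ to non-vanishing on the product lower half-plane as well. For any $(z_1, \ldots, z_n)$ with $\mathfrak{Im}(z_j) > 0$, each $1/z_j$ has negative imaginary part and is nonzero, so $f(z_1^{-1}, \ldots, z_n^{-1}) \ne 0$; multiplying by $z_1 \cdots z_n$, which does not vanish when each $z_j \ne 0$, preserves non-vanishing. That the resulting expression is actually a polynomial (rather than a Laurent polynomial) follows from $f$ being multiaffine in the inverted variables, which is the setting in which the paper later applies the property. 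The main obstacle across the four parts is the substitution step (i): one must honestly confront the possibility that specializing $z_1$ to a particular real value collapses leading coefficients and produces an identically zero limit---this is what forces the convention that $0$ is stable, and it is also what requires care in promoting the classical one-variable Hurwitz theorem to the multivariate setting used here.
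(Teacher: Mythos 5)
The paper does not prove this proposition: it is quoted directly from Borcea and Br\"and\'en as a known closure result, so there is no internal proof to compare against. Judged on its own, your argument is the standard one and is essentially correct: diagonalization is immediate from the definition; substitution follows from a Hurwitz limit together with the convention that the identically zero polynomial counts as stable (and your coordinate-wise bootstrapping of univariate Hurwitz to the multivariate nonvanishing-or-identically-zero dichotomy, while compressed, does go through); inversion correctly uses the conjugate symmetry $f(\bar w)=\overline{f(w)}$ of real-coefficient polynomials --- you are right that this step genuinely needs \emph{real} stability, since $z_j^{-1}$ lands in the lower half-plane, and right that $z_1\cdots z_n f(z_1^{-1},\ldots,z_n^{-1})$ is only a polynomial because $f$ is multiaffine in the inverted variables, which is the setting of the paper's application. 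The one place you should add a line is the Gauss--Lucas step: after freezing $z_2,\ldots,z_m$ in the open upper half-plane, the univariate polynomial $g(z_1)=f(z_1,z_2,\ldots,z_m)$ could a priori drop degree, and if it dropped all the way to a nonzero constant then $g'\equiv 0$ on that fibre and your pointwise conclusion $\partial_1 f\ne 0$ would fail even though $\partial_1 f\not\equiv 0$. This cannot happen, but it requires the observation that the leading coefficient of $f$ in $z_1$ is itself stable (as a Hurwitz limit of the stable polynomials $\lambda^{-d}f(i\lambda,z_2,\ldots,z_m)$ as $\lambda\to\infty$), hence nonvanishing when $\mathfrak{Im}(z_j)>0$ for $j\ge 2$; so $\deg g=\deg_{z_1}f$ on every such fibre and Gauss--Lucas applies, while the degenerate case $\deg_{z_1}f=0$ gives $\partial_1 f\equiv 0$, stable by convention. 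With that patch the proof is complete.
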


In addition, we need the following two propositions for proving Theorem~\ref{thm:sr}.
\begin{prop}[Prop.~2.4~\citep{borcea2008applications}]
  \label{prop:det}
  Let $B$ be Hermitian, $z\in \mathbb{C}^m$ and $A_i$ ($1\le i \le m)$ be Hermitian semidefinite matrices. Then, the following polynomial is stable:
  \begin{equation}
    \label{eq:9}
    f(z) := \det(B + \nlsum_i z_iA_i).
  \end{equation}
\end{prop}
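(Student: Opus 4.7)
The plan is to prove stability by contraposition: assuming $f \not\equiv 0$, I will show that $f(z) \neq 0$ whenever $z \in \mathbb{C}^m$ satisfies $\mathfrak{Im}(z_i) > 0$ for every $i$. Under the standard convention the zero polynomial is stable, and in fact the case $f \equiv 0$ will fall out naturally of the argument below. The engine of the proof is the classical fact that a matrix whose Hermitian imaginary part is positive definite must be invertible; the main obstacle is that here the imaginary part is only guaranteed to be positive \emph{semi}definite, so a short degeneracy argument is needed to handle the PSD-but-not-PD gap.

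First I would examine the Hermitian imaginary part of $M(z) := B + \sum_i z_i A_i$. Since $B$ and the $A_i$ are Hermitian, $\tfrac{1}{2i}\bigl(M(z) - M(z)^*\bigr) = \sum_i \mathfrak{Im}(z_i)\, A_i \succeq 0$. Now suppose, for contradiction, that $M(z)v = 0$ for some $v \neq 0$ at a point $z$ with all $\mathfrak{Im}(z_i) > 0$. Taking the scalar equation $v^* M(z) v = 0$ and separating real and imaginary parts yields
\[
v^* B v + \sum_i \mathfrak{Re}(z_i)\, v^* A_i v = 0, \qquad \sum_i \mathfrak{Im}(z_i)\, v^* A_i v = 0.
\]
Every term in the second identity is nonnegative, forcing $v^* A_i v = 0$ for each $i$. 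Writing $A_i = L_i L_i^*$ gives $\|L_i^* v\|^2 = 0$, so $A_i v = 0$ for all $i$.

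Substituting back into $M(z) v = 0$ gives $B v = 0$, so $v$ lies in the common kernel of $B$ and every $A_i$. Choosing a unitary $U$ whose first column is $v/\|v\|$, the Hermitian matrices $U^* B U$ and $U^* A_i U$ all have vanishing first row and column (since $Hv = 0$ and $v^* H = 0$ for each such $H$). Hence $U^* M(z') U$ inherits that structure for \emph{every} $z' \in \mathbb{C}^m$, so $\det M(z') \equiv 0$ and $f \equiv 0$, contradicting the standing assumption that $f$ is nontrivial. This rules out singular vectors at $z$ and yields $f(z) \neq 0$, which is the required stability.
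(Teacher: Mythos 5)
Your argument is correct. Note, however, that the paper itself offers no proof of this proposition: it is imported verbatim as Prop.~2.4 of \citet{borcea2008applications}, so the comparison here is really with the standard proof in that reference. The textbook route first perturbs each $A_i$ to $A_i + \eps I \succ 0$, observes that a matrix whose Hermitian imaginary part is positive \emph{definite} is invertible (so the perturbed determinant is nonvanishing on the product of open upper half-planes), and then lets $\eps \to 0$, invoking Hurwitz's theorem to conclude that the limit is stable or identically zero. You instead attack the semidefinite case directly: from $v^* M(z) v = 0$ you extract $v^* A_i v = 0$ for all $i$, upgrade this to $A_i v = 0$ via the factorization $A_i = L_i L_i^*$, conclude $Bv = 0$ as well, and hence that any kernel vector at a single upper-half-plane point is a common kernel vector forcing $f \equiv 0$. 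This buys you a self-contained argument with no limiting step and no appeal to Hurwitz, at the cost of having to adopt the convention (standard in \citet{borcea2009negative}, though not literally the definition given in this paper's Section~3) that the identically zero polynomial counts as stable; in the application made of the proposition here, $L + W + Z$ with $W, Z$ generic diagonal, the polynomial is visibly nonzero, so this caveat is harmless.
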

\begin{prop}\label{prop:elsym}
  For $n \leq |S| \leq m$ and $L := A^\top A$, we have 
    $\det(A_SA_S^\top) = e_{n}(L_{S,S})$.
\end{prop}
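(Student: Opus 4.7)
The plan is to prove this by a direct two-step identification: expand $e_n(L_{S,S})$ as a sum of principal $n\times n$ minors, and expand $\det(A_S A_S^\top)$ via Cauchy--Binet, and observe these two sums are term-by-term equal.

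First I would recall the standard identity that for any square matrix $M$, the $k$-th elementary symmetric polynomial of its eigenvalues equals the sum of its $k\times k$ principal minors:
\begin{equation*}
e_k(M) = \sum_{T'\subseteq [d],\,|T'|=k} \det(M_{T',T'}),
\end{equation*}
where $d$ is the size of $M$. Applied to $M = L_{S,S}$ (which is $|S|\times |S|$) with $k=n$, this yields
\begin{equation*}
e_n(L_{S,S}) = \sum_{T\subseteq S,\,|T|=n} \det(L_{T,T}).
\end{equation*}
Here I used that principal submatrices of $L_{S,S}$ indexed by subsets of $\{1,\ldots,|S|\}$ are in bijection with principal submatrices of $L$ indexed by subsets $T\subseteq S$.

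Next I would use the elementary identity $L_{T,T} = (A^\top A)_{T,T} = A_T^\top A_T$, which follows directly from the definition of $L$ since its $(i,j)$ entry is the inner product of columns $i$ and $j$ of $A$. Since $|T|=n$ and $A_T$ is $n\times n$, we have $\det(A_T^\top A_T) = \det(A_T)^2$. Thus
\begin{equation*}
e_n(L_{S,S}) = \sum_{T\subseteq S,\,|T|=n} \det(A_T)^2.
\end{equation*}

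Finally, I would apply the Cauchy--Binet formula to $A_S A_S^\top$ (an $n\times n$ product of an $n\times|S|$ matrix and its transpose), which gives exactly
\begin{equation*}
\det(A_S A_S^\top) = \sum_{T\subseteq S,\,|T|=n} \det(A_T)\det(A_T^\top) = \sum_{T\subseteq S,\,|T|=n} \det(A_T)^2.
\end{equation*}
Combining the two displays proves the claim. There is no real obstacle here: the result is essentially the observation that Cauchy--Binet and the principal-minor formula for $e_n$ produce the same sum; the main thing to be careful about is the index bookkeeping when identifying principal minors of $L_{S,S}$ with principal minors of $L$ indexed by $T\subseteq S$.
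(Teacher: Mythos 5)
Your proof is correct and follows essentially the same route as the paper's: both expand $\det(A_S A_S^\top)$ via Cauchy--Binet into $\sum_{T\subseteq S,\,|T|=n}\det(A_T^\top A_T)$ and identify this with $e_n(L_{S,S})$ through the sum-of-principal-minors formula (the paper merely packages the Cauchy--Binet step through a diagonal matrix $Y$ specialized to the indicator of $S$, which is a cosmetic difference). No gaps.
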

\begin{proof}
  Let $Y=\Diag([y_i]_{i=1}^m)$ be a diagonal matrix. Using the Cauchy-Binet identity we have
  \begin{equation*}
    \begin{split}
      \det(AYA^\top) &= \nlsum_{|T|=n, T \subseteq [m]} 
      \det( (AY)_{:,T})\det((A^\top)_{T,:}) =\nlsum_{|T|=n,T\subseteq [m]}\det(A_T^\top A_T)y^T.
  \end{split}
\end{equation*}
Thus, when $Y=I_S$, the (diagonal) indicator matrix for $S$, we obtain $AYA^\top = A_SA_S^\top$. Consequently, in the summation above only terms with $T\subseteq S$ survive, yielding
\begin{equation*}
  \det(A_SA_S^\top) = \sum_{|T|=n, T \subseteq S} \det(A_T^\top A_T) = \sum_{|T|=n,T\subseteq S} \det(L_{T,T}) = e_{n}(L_{S,S}).\qedhere
\end{equation*}
\end{proof}

We are now ready to sketch the proof of Theorem~\ref{thm:sr}.

\begin{proof}[Proof.~(Theorem~\ref{thm:sr}).] 
  Notationally, it is more convenient to prove that the ``complement'' polynomial $p_c(z) := \sum_{|S|=k,S\subseteq [m]}\det(A_SA_S^\top)z^{S_c}$ is stable; subsequently, an application of Prop.~\ref{prop:basic}-(iv) yields stability of~\eqref{eq:8}. Using matrix notation 
$W = \Diag(w_1, \ldots, w_m)$, $Z = \Diag(z_1, \ldots, z_m)$, our starting stable polynomial (this stability follows from Prop.~\ref{prop:det}) is
  \begin{equation*}
    h(z,w) := \det(L + W + Z),\quad w \in \mathbb{C}^m,\ z \in \mathbb{C}^m,
  \end{equation*}
  which can be expanded as
  \begin{equation*}
  h(z,w) = 
  \nlsum_{S\subseteq [m]}\det(W_S+L_S)z^{S_c}
  = 
  \nlsum_{S\subseteq [m]}
  \left(\nlsum_{T\subseteq S}w^{S\setminus T}\det(L_{T,T})\right)z^{S_c}.
\end{equation*}
  Thus, $h(z,w)$ is real stable in $2m$ variables, indexed below by $S$ and $R$ where $R := S\backslash T$. Instead of the form above, We can sum over $S,R \subseteq [m]$ but then have to constrain the support to the case when $S_c\cap T=\emptyset$ and $S_c\cap R = \emptyset$. In other words, we may write (using Iverson-brackets $\iver{\cdot}$)
\begin{equation}
  \label{eq:7}
  h(z,w) = \sum_{S, R\subseteq [m]}\iver{S_c\cap R=\emptyset \wedge S_c\cap T=\emptyset}\det(L_{T,T})z^{S_c}w^{R}.
\end{equation}
Next, we truncate polynomial~\eqref{eq:7} at degree $(m-k) + (k-n)=m-n$ by restricting $|S_c\cup R|=m-n$. By~\citep[Corollary 4.18]{borcea2009negative} this truncation preserves stability, whence
\begin{equation*}
  H(z,w) := \sum_{\substack{S,R\subseteq [m] \\ |S_c\cup R|=m-n}}\iver{S_c\cap R=\emptyset}\det(L_{S\backslash R, S\backslash R})z^{S_c}w^{R},
\end{equation*}
is also stable. Using Prop.~\ref{prop:basic}-(iii), setting $w_1=\ldots=w_m=y$ retains stability; thus
\begin{align*} 
  g(z,y) :&= H(z,(\underbrace{y,y,\ldots,y}_{\text{$m$ times}})) 
  = \sum_{\substack{S,R\subseteq [m] \\ |S_c\cup R|=m-n}}
     \iver{S_c\cap R=\emptyset}\det(L_{S\backslash R, S\backslash R})z^{S_c}y^{|R|}\\
  &= \sum_{S\subseteq [m]}\Bigl(\nlsum_{|T|=n,T\subseteq S}\det(L_{T,T})\Bigr)y^{|S|-|T|}z^{S_c}
  = \sum_{S \subseteq [m]} e_n(L_{S,S})y^{|S|-n}z^{S_c},
\end{align*}
is also stable. 
Next, differentiating $g(z,y)$, $k-n$ times with respect to $y$ and evaluating at $0$ preserves stability (Prop.~\ref{prop:basic}-(ii) and (i)). In doing so, only terms corresponding to $|S|=k$ survive, resulting in
\begin{equation*}
  \left.\frac{\partial^{k-n}}{\partial y^{k-n}}g(z,y)\right|_{y=0} = (k-n)!\sum_{|S|=k,S\subseteq[m]}e_n(L_{S,S})z^{S_c} = (k-n)!\sum_{|S|=k,S\subseteq[m]}\det(A_S A_S^\top) z^{S_c},
\end{equation*}
which is just $p_c(z)$ (up to a constant); here, the last equality follows from Prop.~\ref{prop:elsym}. This establishes stability of $p_c(z)$ and hence of $p(z)$. Since $p(z)$ is in addition multiaffine, it is the generating polynomial of an SR measure, completing the proof.
\end{proof}

\subsection{Implications: MCMC}
The SR property of $P(S;A)$ established in Theorem~\ref{thm:sr} implies a fast mixing Markov chain for sampling $S$. The states for the Markov chain are all sets of cardinality $k$. The chain starts with a randomly-initialized active set $S$, and in each iteration we swap an element $s^{\text{in}}\in S$ with an element $s^{\text{out}} \notin S$ with a specific probability determined by the probability of the current and proposed set. The stationary distribution of this chain is the one induced by DVS, by a simple detailed-balance argument. The chain is shown in Algorithm~\ref{algo:mcmc}.

\begin{algorithm}
	\begin{algorithmic} \small
	\STATE\textbf{Input:} $A\in\mathbb{R}^{n\times m}$ the matrix of interest, $k$ the target cardinality, $T$ the number of steps
	\STATE\textbf{Output:} $S~\sim P(S; A)$
	\STATE Initialize $S\subseteq [m]$ such that $|S|=k$ and $\det(A_S A_S^\top) > 0$
	\FOR{$i=1$ to $T$}
		\STATE draw $b \in \{0,1\}$ uniformly 
		\IF{$b = 1$}
			\STATE Pick $s^{\text{in}}\in S$ and $s^{\text{out}}\in [m]\backslash S$ uniformly randomly	\\
			\STATE $q(s^{\text{in}},s^{\text{out}},S)\leftarrow\min\left\{ 1, {\det(A_{S\cup \{s^{\text{out}}\}\backslash\{s^{\text{in}}\}}A_{S\cup \{s^{\text{out}}\}\backslash\{s^{\text{in}}\}}^\top)/ \det(A_S A_S^\top)} \right\}$
			\STATE $S\leftarrow S\cup \{s^{\text{out}}\}\backslash\{s^{\text{in}}\}$ with probability $q(s^{\text{in}},s^{\text{out}},S)$	
		\ENDIF
	\ENDFOR
	\end{algorithmic}
	\caption{Markov Chain for Dual Volume Sampling}\label{algo:mcmc}
\end{algorithm}

The convergence of the markov chain is measured via its mixing time:
The \emph{mixing time} of the chain indicates the number of iterations $t$ that we must perform (starting from $S_0$) before we can consider $S_t$ as an approximately valid sample from $P(S;A)$. Formally, if $\delta_{S_0}(t)$ is the total variation distance between the distribution of $S_t$ and $P(S;A)$ after $t$ steps, then \[\tau_{S_0}(\eps) := \min\{t: \delta_{S_0}(t')\le \eps,\ \forall t'\ge t\}\] is the \emph{mixing time} to sample from a distribution $\eps$-close to $P(S;A)$ in terms of total variation distance. We say that the chain mixes fast if $\tau_{S_0}$ is polynomial in the problem size.

The fast mixing result for Algorithm~\ref{algo:mcmc} is a corollary of Theorem~\ref{thm:sr} combined with a recent result of~\citep{anari2016monte} on fast-mixing Markov chains for homogeneous SR measures. Theorem~\ref{thm:fastmc} states this precisely.
\begin{theorem}[Mixing time]\label{thm:fastmc}
The mixing time of Markov chain shown in Algorithm~\ref{algo:mcmc} is given by
\begin{align*}
\tau_{S_0}(\eps)\le 2k(m-k)(\log P(S_0;A)^{-1} + \log\eps^{-1}).
\end{align*}
\end{theorem}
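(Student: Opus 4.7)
}
The plan is to reduce this bound to the generic mixing-time result of~\citet{anari2016monte} for homogeneous strongly Rayleigh measures, using Theorem~\ref{thm:sr} to supply the hypothesis. Concretely, that result states: if $\mu$ is a homogeneous SR measure on $k$-subsets of $[m]$, then the lazy swap (base-exchange) chain---which at each step, with probability $1/2$, picks $s^{\mathrm{in}}\in S$ and $s^{\mathrm{out}}\in [m]\setminus S$ uniformly at random and transitions to $S\cup\{s^{\mathrm{out}}\}\setminus\{s^{\mathrm{in}}\}$ with the Metropolis-Hastings acceptance ratio $\min\{1,\mu(S\cup\{s^{\mathrm{out}}\}\setminus\{s^{\mathrm{in}}\})/\mu(S)\}$---has mixing time bounded by $2k(m-k)(\log \mu(S_0)^{-1} + \log\eps^{-1})$.

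First I would verify the hypothesis: Theorem~\ref{thm:sr} shows that $P(S;A)$ has a real stable multiaffine generating polynomial, and $P(S;A)$ is supported on exactly the $k$-subsets of $[m]$ (by construction). Hence $P(\cdot;A)$ is homogeneous of degree $k$, and therefore is a homogeneous SR measure in the sense required by~\citep{anari2016monte}.

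Next I would match Algorithm~\ref{algo:mcmc} to this canonical chain. The outer coin flip $b\in\{0,1\}$ implements the $1/2$-lazy version (required to avoid periodicity and to match the form of the Anari--Liu--Oveis Gharan--Vinzant bound). Conditioned on $b=1$, the chain chooses $(s^{\mathrm{in}},s^{\mathrm{out}})$ uniformly from $S\times ([m]\setminus S)$, and then accepts with probability
\begin{equation*}
q(s^{\mathrm{in}},s^{\mathrm{out}},S)=\min\!\left\{1,\,\frac{\det(A_{S'}A_{S'}^\top)}{\det(A_S A_S^\top)}\right\}
=\min\!\left\{1,\,\frac{P(S';A)}{P(S;A)}\right\},
\end{equation*}
where $S' = S\cup\{s^{\mathrm{out}}\}\setminus\{s^{\mathrm{in}}\}$, since the partition function cancels in the ratio. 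This is exactly the Metropolis-Hastings acceptance for the swap proposal with target $P(\cdot;A)$, so a routine detailed-balance check shows $P(\cdot;A)$ is stationary, and the transition kernel is identical to the one analyzed in~\citep{anari2016monte}.

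Finally, I would apply their mixing-time bound directly with $\mu=P(\cdot;A)$ to get
\begin{equation*}
\tau_{S_0}(\eps)\le 2k(m-k)\bigl(\log P(S_0;A)^{-1}+\log\eps^{-1}\bigr),
\end{equation*}
which is precisely the claim. The only conceptual subtlety---and the main place where care is required---is to confirm that Algorithm~\ref{algo:mcmc} is literally the lazy base-exchange chain of~\citep{anari2016monte} rather than a variant (e.g.\ non-lazy, or with a different proposal measure over swaps); everything else is a transparent invocation of Theorem~\ref{thm:sr} plus the cited mixing result. A standard initialization remark is also worth including: any $S_0$ with $\det(A_{S_0}A_{S_0}^\top)>0$ gives a finite $\log P(S_0;A)^{-1}$, and such an $S_0$ can be found, for example, via the exact sampler of Corollary~\ref{coro:cond} or any basis-extension procedure using full row rank of $A$.
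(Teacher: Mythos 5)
Your proposal is correct and follows essentially the same route as the paper: Theorem~\ref{thm:sr} supplies the $k$-homogeneous strongly Rayleigh hypothesis, and the bound is then a direct application of the mixing-time result of~\citet{anari2016monte} to the lazy base-exchange chain of Algorithm~\ref{algo:mcmc}. Your additional care in checking that the algorithm literally matches the analyzed chain (laziness, uniform swap proposal, Metropolis acceptance with cancelling partition function) is a reasonable elaboration of what the paper leaves implicit.
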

\begin{proof}
Since $P(S;A)$ is $k$-homogeneous SR by Theorem~\ref{thm:sr}, the chain constructed for sampling $S$ following that in~\citep{anari2016monte} mixes in $\tau_{S_0}(\eps)\le 2k(m-k)(\log P(S_0;A)^{-1} + \log\eps^{-1})$ time. 
\end{proof}

\paragraph{Implementation.} To implement Algorithm~\ref{algo:mcmc} we need to compute the transition probabilities $q(s^{\text{in}}, s^{\text{out}}, S)$. Let $T = S\backslash\{s^{\text in}\}$ and assume $r(A_T) = n$. By the matrix determinant lemma we have the acceptance ratio
\begin{align*}
{\det(A_{S\cup \{s^{\text{out}}\}\backslash\{s^{\text{in}}\}}A_{S\cup \{s^{\text{out}}\}\backslash\{s^{\text{in}}\}}^\top) \over \det(A_S A_S^\top)} 
= {(1 + A_{\{s^{\text out}\}}^\top (A_T A_T^\top)^{-1} A_{\{s^{\text out}\}})\over (1 + A_{\{s^{\text in}\}}^\top (A_T A_T^\top)^{-1} A_{\{s^{\text in}\}})}.
\end{align*}
Thus, the transition probabilities can be computed in $\co(n^2 k)$ time. Moreover, one can further accelerate this algorithm by using the quadrature techniques of~\citep{li2016gaussian} to compute lower and upper bounds on this acceptance ratio to determine early acceptance or rejection of the proposed move.

\paragraph{Initialization.} A remaining question is initialization. Since the mixing time involves $\log P(S_0;A)^{-1}$, we need to start with $S_0$ such that $P(S_0;A)$ is sufficiently bounded away from $0$. We show in Appendix~\ref{app:sec:init} that by a simple greedy algorithm, we are able to initialize $S$ such that $\log P(S;A)^{-1} \ge \log(2^n k!{m\choose k}) = \co(k\log m)$, and the resulting running time for Algorithm~\ref{algo:mcmc} is $\widetilde{\co}(k^3 n^2 m)$, which is \emph{linear} in the size of data set $m$ and is efficient when $k$ is not too large.

\subsection{Further implications and connections}

\paragraph{Concentration.} \citet{pemantle14} show concentration results for strong Rayleigh measures. As a corollary of our Theorem~\ref{thm:sr} together with their results, we directly obtain tail bounds for DVS.

\paragraph{Algorithms for experimental design.} 
Widely used, classical algorithms for finding an approximate optimal design include Fedorov's exchange algorithm \cite{fedorov69,fedorov72} (a greedy local search) and simulated annealing \cite{morris95exploratory}. Both methods start with a random initial set $S$, and greedily or randomly exchange a column $i \in S$ with a column $j \notin S$. Apart from very expensive running times, they are known to work well in practice \cite{nguyen92review,wang16computationally}. Yet so far there is no theoretical analysis, or a principled way of determining when to stop the greedy search.

Curiously, our MCMC sampler is essentially a randomized version of Fedorov's exchange method. The two methods can be connected by a unified, simulated annealing view, where we define 
$P^\beta (S;A) \propto \exp\{\log\det(A_S A_S^\top) / \beta\}$ with temperature parameter $\beta$. Driving $\beta$ to zero essentially recovers Fedorov's method, while our results imply fast mixing for $\beta=1$, together with approximation guarantees. Through this lens, simulated annealing may be viewed as initializing Fedorov's method with the fast-mixing sampler. In practice, we observe that letting $\beta<1$ improves the approximation results, which opens interesting questions for future work.

\section{Experiments}\label{sec:exp}

We report selection performance of DVS 
on real regression data~(CompAct, CompAct(s), Abalone and Bank32NH\footnote{\url{http://www.dcc.fc.up.pt/?ltorgo/ Regression/DataSets.html}}) for experimental design. We use 4,000 samples from each dataset for estimation. We compare against various baselines, including uniform sampling~(\unif), leverage score sampling~(\lev)~\cite{ma2015statistical}, predictive length sampling~(\pl)~\cite{zhu2015optimal}, the sampling~(\smpl)/greedy~(\greedy) selection methods in~\cite{wang16computationally} and Fedorov's exchange algorithm~\cite{fedorov69}. We initialize the MCMC sampler with Kmeans++~\cite{arthur2007k} for DVS and run for 10,000 iterations, which empirically yields selections that are sufficiently good. We measure performances via (1) the prediction error $\|y - X\hat{\alpha}\|$,

and 2) running times. Figure~\ref{fig:cpu} shows the results for these three measures with sample sizes $k$ varying from $60$ to $200$. Further experiments (including for the interpolation $\beta < 1$), may be found in the appendix.

\begin{figure}[h!]
\centering
\includegraphics[width=\textwidth]{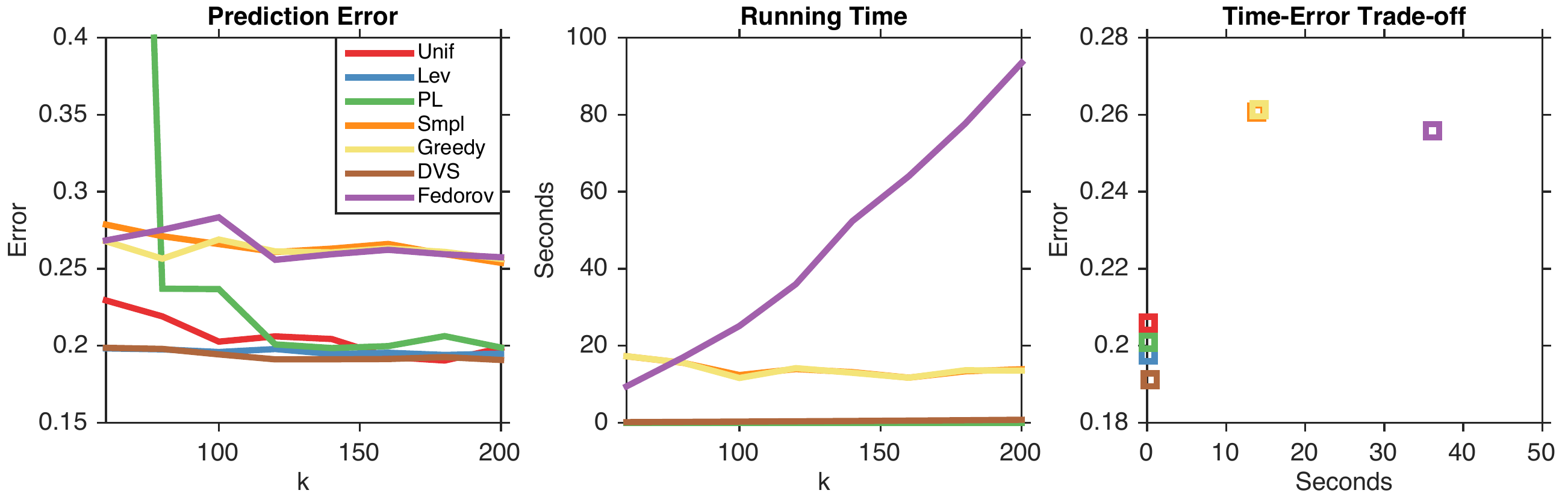}
\caption{Results on the CompAct(s) dataset. 
  Results are the median of 10 runs, except \greedy and \fed. Note that \unif, \lev, \pl and \texttt{DVS} use less than 1 second to finish experiments.}
\label{fig:cpu}
\end{figure}

In terms of prediction error, DVS performs well and is comparable with \lev. Its strength compared to the greedy and relaxation methods (\smpl, \greedy, \fed) is running time, leading to good time-error tradeoffs. These  tradeoffs are illustrated in Figure~\ref{fig:cpu} for $k=120$. 

In other experiments (shown in Appendix~\ref{app:sec:exp}) we observed that in some cases, the optimization and greedy methods~(\smpl, \greedy, \fed) yield better results than sampling, however with much higher running times. Hence, given time-error tradeoffs, DVS may be an interesting alternative in situations where time is a very limited resource and results are needed quickly. 

\vspace*{-5pt}
\section{Conclusion}
\vspace*{-5pt}
In this paper, we study the problem of DVS and develop an exact (randomized) polynomial time sampling algorithm as well as its derandomization.  We further study dual volume sampling via the theory of real-stable polynomials and prove that its distribution satisfies the ``Strong Rayleigh'' property. This result has remarkable consequences, especially because it implies a provably fast-mixing Markov chain sampler that makes dual volume sampling much more attractive to practitioners. Finally, we observe connections to classical, computationally more expensive experimental design methods (Fedorov's method and SA); together with our results here, these could be a first step towards a better theoretical understanding of those methods.

\subsection*{Acknowledgement}
\vspace*{-5pt}

This research was supported by NSF CAREER award 1553284, NSF grant IIS-1409802, DARPA grant N66001-17-1-4039, DARPA FunLoL grant (W911NF-16-1-0551) and a Siebel Scholar Fellowship. The views, opinions, and/or findings contained in this article are those of the author and should not be interpreted as representing the official views or policies, either expressed or implied, of the Defense Advanced Research Projects Agency or the Department of Defense.

\bibliographystyle{abbrvnat}
\setlength{\bibsep}{4pt}
\bibliography{refer}

\newpage

\begin{appendix}

\section{Partition Function}\label{app:sec:partition}
We recall two easily verified facts about determinants that will be useful in our analysis: 
\begin{align}
\label{eq:4}
\det(K + u v^\top) &= \det(K) (1 + u^\top K^{-1}v),\quad\text{for}\ K \in \text{GL}_n(\mathbb{R}),\\
\label{eq:5}
a^{m-n}\det(AA^\top + aI_n) &= \det(A^\top A + aI_m),\quad\text{for}\  A\in\mathbb{R}^{n\times m}~(n\le m),\ \text{and}\ a > 0.
\end{align}
The first one is known as matrix determinant lemma.

The partition function of $P(\cdot;A)$, 
happens to have a pleasant closed-form formula. Although this formula is known~\citep{avron2013faster}, and follows immediately by an application of the Cauchy-Binet identity, we present an alternative proof based on the perturbation argument for its conceptual value and subsequent use.
\begin{theorem}[Partition Function~\citep{avron2013faster}] \label{thm:partition}
Given $A\in\mathbb{R}^{n\times m}$ where $r(A) = n$ and $n\le |S| = k\le m$, we have
\begin{align}
\sum_{|S| = k, S\subseteq [m]}\det(A_S A_S^\top) = \left(\begin{array}{c}m-n\\k-n\end{array}\right) \det(A A^\top).
\end{align}
\end{theorem}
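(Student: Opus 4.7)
The plan is to carry out the perturbation approach hinted at in Section~\ref{sec:epstrick}, which allows us to reduce the dual volume sum (where $k \ge n$) to an expression computable via ordinary volume-sampling identities. I will work with $\sum_{|S|=k} \det(A_S A_S^\top + \eps I_n)$ and recover the desired quantity in the limit $\eps \to 0^+$.

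First, using identity~\eqref{eq:5} applied to the $n \times k$ submatrix $A_S$, I get $\det(A_S A_S^\top + \eps I_n) = \eps^{n-k}\det(A_S^\top A_S + \eps I_k)$. The factor $\eps^{n-k}$ blows up as $\eps \to 0$, but this will be canceled by zeros hiding in the other factor, which is the key reason the perturbation argument succeeds. Expanding the $k \times k$ determinant on the right in elementary symmetric polynomials of the eigenvalues gives
\begin{equation*}
\det(A_S^\top A_S + \eps I_k) = \sum_{j=0}^{k} \eps^{k-j}\, e_j(A_S^\top A_S).
\end{equation*}
Since $A_S^\top A_S$ has rank at most $n$, the terms with $j > n$ vanish, so effectively the sum runs only up to $j = n$.

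Next, I will use the identity $e_j(A_S^\top A_S) = \sum_{|T|=j,\, T\subseteq S}\det(A_T^\top A_T)$ (the $j$th elementary symmetric polynomial equals the sum of $j\times j$ principal minors) and exchange the order of summation:
\begin{equation*}
\sum_{|S|=k}\det(A_S^\top A_S + \eps I_k)
= \sum_{j=0}^{n} \eps^{k-j}\sum_{|T|=j}\det(A_T^\top A_T)\cdot\#\{S : |S|=k,\ T\subseteq S\}.
\end{equation*}
The combinatorial count is simply $\binom{m-j}{k-j}$, and $\sum_{|T|=j}\det(A_T^\top A_T) = e_j(A^\top A) = e_j(AA^\top)$ (the two Gram matrices share the same non-zero spectrum). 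Multiplying by $\eps^{n-k}$ to return to the original perturbed dual-volume sum yields
\begin{equation*}
\sum_{|S|=k}\det(A_S A_S^\top + \eps I_n) = \sum_{j=0}^{n} \eps^{n-j}\binom{m-j}{k-j} e_j(AA^\top).
\end{equation*}

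Finally, I let $\eps \to 0^+$. The left-hand side converges to $\sum_{|S|=k}\det(A_S A_S^\top)$ by continuity, and on the right only the $j = n$ term survives, giving $\binom{m-n}{k-n}e_n(AA^\top) = \binom{m-n}{k-n}\det(AA^\top)$, as claimed. I do not anticipate a real obstacle here: the calculation is a bookkeeping exercise once the two identities~\eqref{eq:4}--\eqref{eq:5} and the principal-minor expansion of $e_j$ are in hand. The only mildly delicate point is to ensure that $e_j(A_S^\top A_S)$ indeed vanishes for $j > n$, which uses only that $\mathrm{rank}(A_S^\top A_S) \le n$. The conceptual benefit of this proof, as the paper highlights, is that exactly the same perturbation-and-limit machinery generalizes to the marginals computation in Theorem~\ref{thm:marginal}.
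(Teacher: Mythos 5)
Your proof is correct and follows essentially the same route as the paper: the $\eps$-perturbation via identity~\eqref{eq:5}, reduction to elementary symmetric polynomials of $A^\top A$, and the limit $\eps\to 0^+$ in which only the top-degree term survives. The only cosmetic difference is that the paper evaluates $\sum_{|S|=k}\det(A_S^\top A_S+\eps I_k)=e_k(A^\top A+\eps I_m)$ directly from the spectrum $\{\sigma_i^2(A)+\eps\}\cup\{\eps,\ldots,\eps\}$, whereas you expand each summand in powers of $\eps$ and double-count the pairs $T\subseteq S$; both are valid bookkeepings of the same computation.
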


\begin{proof} 
First note that for $n\le |S| = k\le m$ and any $\eps > 0$, by~\eqref{eq:5} we have
\begin{align*}
\det(A_S A_S^\top + \eps I_n) = {1\over \eps^{k-n}}\det(A_S^\top A_S + \eps I_k) 
\end{align*}
Taking limits as $\eps\to 0$ on both sides we have
\begin{align*}
\det(A_S A_S^\top) &= \lim_{\eps\to 0} \det(A_S A_S^\top + \eps I_n) = \lim_{\eps\to 0}{1\over \eps^{k-n}} \det(A_S^\top A_S + \eps I_k).
\end{align*}
Let us focus on $\det(A_S^\top A_S + \eps I_k)$. We construct an identity matrix $I_m\in\mathbb{R}^{m\times m}$, then we have
\begin{equation}
  \label{eq:bridge}
  \begin{split}
    \det(A_S^\top A_S + \eps I_k) &= \det(A_S^\top A_S + \eps I_S^\top I_S)
    = \det(A_S^\top A_S + (\sqrt{\eps}I_S)^\top \sqrt{\eps}I_S)\\
    &= \det\left(\comb{S}^\top \comb{S}\right)
    \propto \widehat{P}\left(S;\combine\right).
  \end{split}
\end{equation}
In other words, this value is proportional to the probability of sampling columns from $\combine$ using volume sampling. Therefore, using the definition of $e_k$ we have
\begin{align*}
\frac{1}{\eps^{k-n}}\sum_{|S|=k, S \subseteq [m]}&\det(A_S^\top A_S + \eps I_k) = {1\over \eps^{k-n}} e_k (A^\top A + \eps I_m)\\
&= {1\over \eps^{k-n}} e_k (\Diag([(\sigma_1^2(A) + \eps), (\sigma_2^2(A) + \eps),\ldots, (\sigma_n^2(A) + \eps), \eps,\ldots,\eps]))\\
&= \binom{m-n}{k-n}\prod\nolimits_{i=1}^n (\sigma_i^2(A) + \eps) + O(\eps).
\end{align*}
Now taking the limit as $\eps\to 0$ we obtain
\begin{align*}
\sum_{|S| = k, S\subseteq [m]}\det(A_S A_S^\top) &= \lim_{\eps\to 0}\binom{m-n}{k-n}\prod\nolimits_{i=1}^n (\sigma_i^2(A) + \eps) + O(\eps) = 
\binom{m-n}{k-n}\det(A A^\top).
\end{align*}
\end{proof}

\section{Marginal Probability}\label{app:sec:marginal}

\begin{proof}
  The marginal probability of a set $T \subseteq [m]$ for dual volume sampling is 
\begin{align*}
P(T \subseteq S ; A) &= {\sum_{S\supseteq T, |S| = k} \det(A_S A_S^\top) \over \sum_{|S'|=k}\det(A_{S'} A_{S'}^\top)}.
\end{align*}
Theorem~\ref{thm:partition} shows how to compute the denominator, thus our main effort is devoted to the nominator. We have
\begin{align*}
\sum_{S\supseteq T, |S| = k} \det(A_S A_S^\top) &= \sum_{R \cap T = \emptyset, |R| = k-|T|}\det(A_{T\cup R} A_{T\cup R}^\top)
\end{align*}
Using the $\eps$-trick we have
\begin{align*}
\sum_{R \cap T = \emptyset, |R| = k-|T|}\det(A_{T\cup R} A_{T\cup R}^\top) 
&= \lim_{\eps\to 0} \sum_{R \cap T = \emptyset, |R| = k-|T|}\det(A_{T\cup R} A_{T\cup R}^\top + \eps I_n)\\
&= \lim_{\eps\to 0} {1\over \eps^{k-n}}\sum_{R \cap T = \emptyset, |R| = k-|T|}\det(A_{T\cup R}^\top A_{T\cup R} + \eps I_k).
\end{align*}
By decomposing $\det(A_{T\cup R}^\top A_{T\cup R} + \eps I_k)$ we have
\begin{align*}
\det&(A_{T\cup R}^\top  A_{T\cup R} + \eps I_k) \\
&= \det(A_T^\top A_T + \eps I_{|T|})\det\left(A_R^\top A_R + \eps I_{|R|} - A_R^\top A_T(A_T^\top A_T + \eps I_{|T|})^{-1} A_T^\top A_R\right).
\end{align*}

Now we let $A_T = Q_T \Sigma_T V_T^\top$ be the singular value decomposition of $A_T$ where $Q_T\in\mathbb{R}^{n\times r(A_T)}$, $\Sigma_T \in\mathbb{R}^{r(A_T)\times |T|}$ and $V_T\in\mathbb{R}^{|T|\times |T|}$. Plugging the decomposition in the equation we obtain
\begin{align*}
A_R^\top A_T(A_T^\top A_T &+ \eps I_{|T|})^{-1} A_T^\top A_R = A_R^\top Q_T\Sigma_T V_T^\top (V_T\Sigma_T^\top\Sigma_T V_T^\top + \eps I_{|T|})^{-1}V_T\Sigma_T^\top Q_T^\top A_R\\
&= A_R^\top Q_T\Sigma_T (\Sigma_T^\top \Sigma_T + \eps I_{|T|})^{-1}\Sigma_T^\top Q_T^\top A_R\\
&= A_R^\top Q_T 
\left[\begin{array}{cccc}
{\sigma_1^2(A_T)\over \sigma_1^2(A_T) + \eps} & 0 & \ldots & 0\\
0 & {\sigma_2^2(A_T)\over \sigma_2^2(A_T) + \eps} & \ldots & 0\\
\vdots & \vdots & \ddots & \vdots\\
0 & 0 & \ldots & {\sigma_{r(A_T)}^2(A_T)\over \sigma_{r(A_T)}^2(A_T) + \eps}
\end{array}\right]
Q_T^\top A_R\\
&= A_R^\top Q_T Q_T^\top A_R - \eps A_R^\top Q_T 
\left[\begin{array}{cccc}
{1\over \sigma_1^2(A_T) + \eps} & 0 & \ldots & 0\\
0 & {1\over \sigma_2^2(A_T) + \eps} & \ldots & 0\\ 
\vdots & \vdots & \ddots & \vdots\\
0 & 0 & \ldots & {1\over \sigma_{r(A_T)}^2(A_T)+ \eps}
\end{array}\right]
Q_T^\top A_R.
\end{align*}
Thus it follows that
\begin{align*}
A_R^\top A_R + &\eps I_{|R|} - A_R^\top A_T(A_T^\top A_T + \eps I_{|T|})^{-1} A_T^\top A_R \\
&= A_R^\top (I - Q_T Q_T^\top) A_R + \eps A_R^\top Q_T
\left[\begin{array}{ccc}
{1\over \sigma_1^2(A_T) + \eps} & 0 & \ldots\\
0 & {1\over \sigma_2^2(A_T) + \eps} & \ldots\\
\vdots & \vdots & \ddots
\end{array}\right]
Q_T^\top A_R + \eps I_{|R|}\\
&= B_R^\top B_R + \eps C_R^\top C_R + \eps I_{|R|},
\end{align*}
where $B_R$ is the projection of columns of $A_R$ on the orthogonal space of columns of $A_T$. Let $Q_T^\perp\in\mathbb{R}^{n\times (n - r(A_T))}$ be the complement column space of $Q_T$, then we have $B_R = (Q_T^\perp)^\top A_R\in\mathbb{R}^{(n - r(A_T))\times |R|}$. Moreover, 
\begin{align*}
C_R = \left[\begin{array}{ccc}
{1\over \sqrt{\sigma_1^2(A_T) + \eps}} & 0 & \ldots\\
0 & {1\over \sqrt{\sigma_2^2(A_T) + \eps}} & \ldots\\
\vdots & \vdots & \ddots
\end{array}\right]
Q_T^\top A_R \in\mathbb{R}^{r(A_T)\times |R|}.
\end{align*}
We further let $B_{T_c} = (Q_T^\perp)^\top A_{T_c}\in\mathbb{R}^{(n-r(A_T))\times (m-|T|)}$ and 
\begin{align*}
C_{T_c} = \left[\begin{array}{ccc}
{1\over \sqrt{\sigma_1^2(A_T) + \eps}} & 0 & \ldots\\
0 & {1\over \sqrt{\sigma_2^2(A_T) + \eps}} & \ldots\\
\vdots & \vdots & \ddots
\end{array}\right]
Q_T^\top A_{T_c} \in\mathbb{R}^{r(A_T)\times (m-|T|)}
\end{align*}
where $T_c = [m]\backslash T$.
Then we have
\begin{align*}
\sum_{R \cap T = \emptyset, |R| = k-|T|}&\det(A_{T\cup R}^\top A_{T\cup R} + \eps I_k) \\
&= \det(A_T^\top A_T + \eps I_{|T|})\sum_{R \cap T = \emptyset, |R| = k-|T|} \det(B_R^\top B_R + \eps C_R^\top C_R + \eps I_{|R|})\\
&= \det(A_T^\top A_T + \eps I_{|T|})\times e_{k-|T|}\left(\comball \comball^\top\right)
\end{align*}
where we construct an orthonormal matrix $U\in\mathbb{R}^{(m-|T|)\times (m-|T|)}$ whose columns are basis vectors. Since we are free to chose any orthonormal $U$, we simply let it be $I$. Let $W_{T_c} = \left[\begin{array}{c} I_{T_c} \\ C_{T_c}\end{array}\right]$, we have
\begin{align*}
\left(\comball \comball^\top\right) &= 
\left(\comballw \comballw^\top\right) \\
&= F_{T_c} \in\mathbb{R}^{(m+n-|T|)\times (m+n-|T|)}
\end{align*}
The properties of characteristic polynomials imply that
\begin{align*}
e_{k-|T|}(F_{T_c}) &= \sum_{|S| = k-|T|} \det((F_{T_c})_{S,S})\\
&= \sum_{S_1,S_2} \det((F_{T_c})_{S_1,S_1}) \det((F_{T_c})_{S_2,S_2} - (F_{T_c})_{S_2,S_1}(F_{T_c})_{S_1,S_1}^{-1}(F_{T_c})_{S_1,S_2})
\end{align*}
where $S_1 = S\cap[r(B_{T_c})]$ and $S_2 = [m+n-|T|]\backslash S_1$. Further we have
\begin{align*}
&\sum_{S_1,S_2} \det((F_{T_c})_{S_1,S_1}) \det((F_{T_c})_{S_2,S_2} - (F_{T_c})_{S_2,S_1}(F_{T_c})_{S_1,S_1}^{-1}(F_{T_c})_{S_1,S_2}) \\
&= \sum_{S_1,S_2}\eps^{k-|T|-|S_1|}\det((B_{T_c})_{S_1}(B_{T_c})_{S_1}^\top)\times \\
&\quad\quad\quad\quad\quad\quad\det((W_{T_c})_{S_2} (W_{T_c})_{S_2}^\top - (W_{T_c})_{S_2} (B_{T_c})_{S_1}^\top((B_{T_c})_{S_1}(B_{T_c})_{S_1}^\top)^{-1}(B_{T_c})_{S_1} (W_{T_c})_{S_2}^\top )
\end{align*}
Hence it follows that
\begin{align*}
\lim_{\eps\to 0} &{1\over \eps^{k-n}}\sum_{R \cap T = \emptyset, |R| = k-|T|}\det(A_{T\cup R}^\top A_{T\cup R} + \eps I_k) 
= \lim_{\eps\to 0} {1\over \eps^{k-n}}\det(A_T^\top A_T + \eps I_{|T|})\times e_{k-|T|}(F_{T_c})\\
&= \lim_{\eps\to 0} {1\over \eps^{k-n}}\eps^{|T|-r(A_T)}\left[\prod_{i=1}^{r(A_T)}(\sigma_i^2(A_T) + \eps)\right] \times \\&\sum_{|S| = k-|T|}\eps^{k-|T|-|S_1|}\det((B_{T_c})_{S_1}(B_{T_c})_{S_1}^\top) \times\\
&\quad\quad\quad\quad \det((W_{T_c})_{S_2} (W_{T_c})_{S_2}^\top - (W_{T_c})_{S_2} (B_{T_c})_{S_1}^\top((B_{T_c})_{S_1}(B_{T_c})_{S_1}^\top)^{-1}(B_{T_c})_{S_1} (W_{T_c})_{S_2}^\top )
\end{align*}
(Since $r(A_T) + r(B_{T_c}) = n$ and $|S_1|\le r(B_{T_c})$)
\begin{align*}
&= \lim_{\eps\to 0} {1\over \eps^{k-n}}\eps^{|T|-r(A_T)}\left[\prod_{i=1}^{r(A_T)}(\sigma_i^2(A_T)+\eps)\right] \times \\&\sum_{|S| = k-|T|}\eps^{k-|T|-r(B_{T_c})}\det(B_{T_c}B_{T_c}^\top) \det((W_{T_c})_{S_2} (W_{T_c})_{S_2}^\top - (W_{T_c})_{S_2} B_{T_c}^\top(B_{T_c}B_{T_c}^\top)^{-1}B_{T_c} (W_{T_c})_{S_2}^\top ) + O(\eps)\\
&= \left[\prod_{i=1}^{r(A_T)}\sigma_i^2(A_T)\right] \times \left[\prod_{j=1}^{r(B_{T_c})}\sigma_j^2(B_{T_c})\right] \sum_{S_2}\det((W_{T_c})_{S_2} (W_{T_c})_{S_2}^\top - (W_{T_c})_{S_2} B_{T_c}^\top(B_{T_c}B_{T_c}^\top)^{-1}B_{T_c} (W_{T_c})_{S_2}^\top )
\end{align*}
where $S_2\subseteq [m+n-|T|]\backslash [r(B_{T_c})]$ and $|S_2| = k-|T|-r(B_{T_c})$.

Let $Q_{B_{T_c}}\mathrm{diag}(\sigma_i^2(B_{T_c})) Q_{B_{T_c}}^\top$ be the eigenvalue decomposition of $B_{T_c}^\top B_{T_c}$ where $Q_{B_{T_c}}\in\mathbb{R}^{|T_c|\times r(B_{T_c})}$. Further, let $Q_{B_{T_c}}^\perp$ be the complement column space of $Q_{B_{T_c}}$, thus we have 
\begin{align*}
\left[\begin{array}{c}
Q_{B_{T_c}}^\top\\
(Q_{B_{T_c}}^\perp)^\top
\end{array}\right] 
\left[\begin{array}{c c}
Q_{B_{T_c}} &
Q_{B_{T_c}}^\perp
\end{array}\right]
= I_{|T_c|} = I_{n-|T|}
\end{align*}
Then for any $S_2\subseteq [m+n-|T|]\backslash [r(B_{T_c})]$ we have
\begin{align*}
\det((W_{T_c})_{S_2} (W_{T_c})_{S_2}^\top - (W_{T_c})_{S_2} B_{T_c}^\top&(B_{T_c}B_{T_c}^\top)^{-1}B_{T_c} (W_{T_c})_{S_2}^\top ) = \det(W_{S_2}(I_{n-|T|} - Q_{B_{T_c}} Q_{B_{T_c}}^\top)(W_{T_c})_{S_2}^\top)\\
&= \det((W_{T_c})_{S_2}( Q_{B_{T_c}}^\perp (Q_{B_{T_c}}^\perp)^\top)(W_{T_c})_{S_2}^\top)
\end{align*}
It follows that
\begin{align*}
\sum_{S_2}\det(W_{S_2} (W_{T_c})_{S_2}^\top - (W_{T_c})_{S_2} B_{T_c}^\top(B_{T_c}B_{T_c}^\top)^{-1}B_{T_c} (W_{T_c})_{S_2}^\top ) &= e_{k-|T|-r(B_{T_c})}(W_{T_c}((Q_{B_{T_c}}^\perp)^\top Q_{B_{T_c}}^\perp)W_{T_c}^\top)\\ &= E_T
\end{align*}
Combining all the above derivations, we obtain that
\begin{align*}
  \Pr(T \subseteq S | S\sim P(S; A)) &= {\left[\prod_{i=1}^{r(A_T)}\sigma_i^2(A_T)\right] \times \left[\prod_{j=1}^{r(B_{T_c})}\sigma_j^2(B_{T_c})\right] \times \Gamma_T \over \left(\begin{array}{c}n-m\\k-m\end{array}\right) \det(A A^\top)}.
\end{align*}
\end{proof}

\section{Approximate Sampling via Volume Sampling}\label{app:sec:approx}

\begin{corollary}[Approximate DVS via Random Projection]\label{thm:approx}
  For any $\eps>0$ and $\delta_2 > 0$ there is an algorithm that, in time $\widetilde{\co}({k^2nm\over \delta_2^2} + {k^7 m \over \delta_2^6})$, samples a subset from an approximate distribution $\widetilde{P}(\cdot;A)$ with $\delta_1 = \max_{|S| = k} (1 + {\eps\over \sigma_{\min}^2(A_S)})^n - 1 \approx {n\eps\over \sigma_{\min}^2(A_S)}$ and

\begin{align*}
{\widetilde{P}(S;A)\over (1+\delta_1)(1+\delta_2)} \le P(S;A)\le (1+\delta_1)(1+\delta_2) \widetilde{P}(S;A);\quad\forall S\subseteq[m].
\end{align*}
\end{corollary}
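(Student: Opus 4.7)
The plan is to combine the $\eps$-trick of~\eqref{eq:eps} with the random-projection-based approximate volume sampling algorithm of~\cite{deshpande2010efficient}, applied to the augmented tall matrix $M := \combine \in \mathbb{R}^{(n+m)\times m}$. By~\eqref{eq:eps}, for every $|S|=k$ we have $\det(M_S^\top M_S) = \eps^{k-n}\det(A_S A_S^\top + \eps I_n)$, so ordinary volume sampling on $M$ draws $S$ with probability $P'(S;A) \propto \det(A_S A_S^\top + \eps I_n)$, a mild perturbation of the DVS distribution $P(\cdot;A)$. The strategy is to bound the error from this perturbation ($\delta_1$) and the error from approximately volume-sampling on $M$ via random projection ($\delta_2$), then compose the two.

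For the $\delta_1$ bound, I would write $\det(A_S A_S^\top + \eps I_n) = \det(A_S A_S^\top)\prod_{i=1}^n(1 + \eps/\sigma_i^2(A_S))$, so each factor lies in $[1, 1+\eps/\sigma_{\min}^2(A_S)]$ and the ratio $\det(A_S A_S^\top + \eps I_n)/\det(A_S A_S^\top)$ is sandwiched between $1$ and $(1+\eps/\sigma_{\min}^2(A_S))^n$. Taking the maximum over $|S|=k$ gives a uniform pointwise bound of $(1+\delta_1)$ between the unnormalized densities, and since the partition functions inherit the same two-sided bound, the normalized distributions satisfy $P'(S;A)/(1+\delta_1)\le P(S;A)\le (1+\delta_1)P'(S;A)$.

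For the $\delta_2$ bound and the runtime, I would invoke~\cite{deshpande2010efficient}, which first projects the row space of $M$ via a Gaussian matrix $R\in\mathbb{R}^{d\times(n+m)}$ with $d=\mathrm{poly}(k/\delta_2)$, approximately preserving the squared $k$-volume of every column subset to within $(1\pm\delta_2)$, and then runs exact volume sampling on the compressed $d\times m$ matrix. The key computational observation is that, by the block structure of $M$, we have $RM = R_1 A + \sqrt{\eps}R_2$ where $R_1 = R_{:,1:n}$ and $R_2 = R_{:,n+1:n+m}$, which can be assembled in $\widetilde{\co}(k^2 nm/\delta_2^2)$ time (avoiding any $m^2$ dependence); the subsequent exact volume-sampling step on the compressed matrix contributes the $\widetilde{\co}(k^7 m/\delta_2^6)$ term per~\cite{deshpande2010efficient}. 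Letting $\widetilde{P}$ denote the output distribution, composing the two pointwise multiplicative approximations yields $\widetilde{P}(S;A)/[(1+\delta_1)(1+\delta_2)] \le P(S;A)\le (1+\delta_1)(1+\delta_2)\widetilde{P}(S;A)$.

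The main obstacle is the bookkeeping around the random projection: we need to confirm that the identity-block structure of $M$ is genuinely exploited so that the projection of an $(n+m)\times m$ matrix costs only $\widetilde{\co}(k^2 nm/\delta_2^2)$ rather than the naive $\widetilde{\co}(k^2 m(n+m)/\delta_2^2)$, and that the subsequent exact volume sampling on the compressed $d\times m$ matrix attains the $\widetilde{\co}(k^7 m/\delta_2^6)$ bound without any hidden factor of $n+m$ leaking through. Once the sampling error is shown to hold uniformly on every $k$-subset (so that normalization preserves the multiplicative factor), the composition with the $\delta_1$ bound is routine.
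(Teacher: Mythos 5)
Your proposal matches the paper's own proof essentially step for step: the same $\eps$-perturbed distribution $\propto\det(A_SA_S^\top+\eps I_n)$ realized as volume sampling on the augmented matrix $\combine$, the same $(1+\eps/\sigma_{\min}^2(A_S))^n$ sandwich yielding $\delta_1$, the same Gaussian projection to $d=\co(k^2\log m/\delta_2^2)$ dimensions exploiting the block structure $G\combine = G_A A + \sqrt{\eps}G_A'$ for the $\widetilde{\co}(k^2nm/\delta_2^2)$ projection cost, and the same $\co(kd^3m)$ volume-sampling step from~\cite{deshpande2010efficient}. Your explicit remark that the partition functions inherit the two-sided bound (so normalization preserves the multiplicative factor) is a small point the paper leaves implicit, but the argument is otherwise the same.
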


It may happen in practice that $n\ll m$ but $k$ is of the same order as $n$. In such case we can transform the dual volume sampling to slightly distorted volume sampling based on~\eqref{eq:5} and then take the advantage of determinant-preserving projections to accelerate the sampling procedure. 

Concretely, instead of sampling column subset $S$ with probability proportional to $\det(A_S A_S^\top)$, we sample with probability proportional to a distorted value $\det(A_S A_S^\top + \eps I_n)$ for small $\eps > 0$. Denoting this distorted distribution as $P_\eps(S;A)$, we have
\begin{align*}
P_\eps(S; A) = {1\over \eps^{k-n}} \det(A_S^\top A_S + \eps I_k) = {1\over \eps^{k-n}}\prod_{i=1}^n(\sigma_i^2(A_S) + \eps).
\end{align*}
Letting $\sigma_{\min}(A_S) > 0$ be the minimum singular value, we have
\begin{align*}
1\le {\prod_{i=1}^n(\sigma_i^2(A_S) + \eps)\over \prod_{i=1}^n(\sigma_i^2(A_S))}\le (1 + {\eps\over \sigma_{\min}^2(A_S)})^n.
\end{align*}
We further let 
\begin{align*}
\delta_1 = \max_{|S| = k} (1 + {\eps\over \sigma_{\min}^2(A_S)})^n - 1 \approx {n\eps\over \sigma_{\min}^2(A_S)},
\end{align*}
when $\eps$ sufficiently small. Sampling from $P_{\eps}$ will yield $(1+\delta_1)$-approximate dual volume sampling (in the sense of \citep{deshpande2010efficient} and our Theorem~\ref{thm:approx}). We can sample from $P_{\eps}$ via \emph{volume sampling} with distribution $\widehat{P}(S; \combinei)$. With the volume sampling algorithm proposed in~\citep{deshpande2010efficient}, the resulting running time would be $\widetilde{\co}(km^4)$.  

To accelerate sampling procedure, we consider random projection techniques that preserve volumes. \cite{magen2008near} showed that Gaussian random projections indeed preserve volumes as we need: 

\begin{theorem}[Random Projection~\citep{magen2008near}]\label{thm:projection}
For any $X\in\mathbb{R}^{n\times m}$, $1\le k\le m$ and $0 < \delta_2 \le 1/2$, the random Gaussian projection of $\mathbb{R}^m\to \mathbb{R}^d$ where
\begin{align*}
d = \co\left(k^2\log n\over \delta_2^2 \right),
\end{align*}
satisfies
\begin{align}
\det(X_S^\top X_S)\le \det(\widetilde{X}_S^\top \widetilde{X}_S)\le (1 + \delta_2)\det(X_S^\top X_S)
\end{align}
for all $S\subseteq [n]$ and $|S| \le k$ where $\widetilde{X}$ is the projected matrix.
\end{theorem}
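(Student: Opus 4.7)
The statement is a volume-preserving sharpening of the Johnson--Lindenstrauss (JL) lemma. My plan is to reduce preservation of the $k$-dimensional squared volume $\det(X_S^\top X_S)$ to preservation of a sequence of one-dimensional \emph{heights} obtained from Gram--Schmidt, then invoke a subspace-embedding form of JL for Gaussian matrices together with a union bound over the relevant column subsets.

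\emph{Step 1 (Volume as a product of heights).} For any $S$ with $|S|\le k$, order its elements as $(s_1,\ldots,s_{|S|})$ and apply Gram--Schmidt to $X_{s_1},\ldots,X_{s_{|S|}}$. Writing $U_i=\sspan(X_{s_1},\ldots,X_{s_{i-1}})$ and $h_i=\|P^{\perp}_{U_i}X_{s_i}\|$ for the height of the $i$-th column above the span of its predecessors, one has the standard factorisation $\det(X_S^\top X_S) = \prod_{i=1}^{|S|} h_i^{2}$. The analogous quantity after projection, $\det(\widetilde{X}_S^\top \widetilde{X}_S)$, factorises as $\prod_i \widetilde{h}_i^{\,2}$. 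Consequently, if each $\widetilde{h}_i^{\,2}\in [1,1+\varepsilon]\,h_i^{2}$, then the product satisfies $\det(\widetilde{X}_S^\top\widetilde{X}_S)\in [1,(1+\varepsilon)^{k}]\det(X_S^\top X_S)\subseteq [1,1+\delta_2]\det(X_S^\top X_S)$ as soon as $\varepsilon=\Theta(\delta_2/k)$.

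\emph{Step 2 (Subspace embedding).} Each $h_i$ is the norm of a vector lying in the at-most-$k$-dimensional subspace $V_S=\sspan(\{X_{s_j}\}_{j\le |S|})$. Hence it suffices to show that, for every $S$ with $|S|\le k$, the rescaled Gaussian projection $\Pi$ acts as a $(1\pm\varepsilon)$ isometry on $V_S$. This is the classical subspace-embedding guarantee: a properly normalized $d\times n$ Gaussian matrix with $d=\Theta((k+\log(1/\eta))/\varepsilon^{2})$ is a $(1\pm\varepsilon)$ isometry on any fixed $k$-dimensional subspace with probability at least $1-\eta$, proved via a standard $\varepsilon$-net on the unit sphere of $V_S$ (of size $(3/\varepsilon)^{k}$) together with a Gaussian norm concentration bound. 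The upper bound $h_i^{2}\le\widetilde{h}_i^{\,2}$ uses that the projected residual is the \emph{closest} element in $\widetilde{U}_i^{\perp}$ to $\widetilde{X}_{s_i}$, combined with the isometry on $V_S$; the one-sided form in the theorem statement follows from carefully choosing the normalization of $\Pi$ so that lengths are contracted (not stretched) in expectation.

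\emph{Step 3 (Union bound and main obstacle).} The union bound is over the at most $\binom{m}{k}\le (em/k)^{k}$ candidate subspaces $V_S$. Setting $\eta=\binom{m}{k}^{-1}/2$ in the subspace-embedding guarantee and $\varepsilon=\Theta(\delta_2/k)$ to absorb the $k$-fold product, and using $\log\binom{m}{k}=O(k\log m)$, yields the advertised dimension $d=\Theta(k^{2}\log m/\delta_2^{2})$. The main obstacle is Step 2: vanilla (pointwise) JL is not enough, because the direction of the residual $P^{\perp}_{U_i}X_{s_i}$ is \emph{not} fixed in advance---it depends on the random ordering/choice of the predecessors. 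One must therefore upgrade to the subspace-embedding form of JL on each candidate $V_S$, and it is precisely this upgrade that injects the extra factor of $k$ (versus the vanilla JL dimension $O(\log m/\varepsilon^{2})$) into the final bound.
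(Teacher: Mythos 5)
First, a remark on scope: the paper does not prove this statement at all --- it is quoted (typos included) from \citet{magen2008near} and used as a black box in Corollary~\ref{thm:approx} --- so the comparison here is against the argument in that reference rather than against anything in this paper.

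Your Steps 1--2 are sound: the base-times-height factorization $\det(X_S^\top X_S)=\prod_i h_i^2$ is correct, and you are right that pointwise Johnson--Lindenstrauss does not suffice because the residual directions $P^\perp_{U_i}X_{s_i}$ are not fixed in advance; a subspace embedding of each $V_S$ is the natural fix, and it does give $(1-\eps)h_i\le\widetilde h_i\le(1+\eps)h_i$ since $\widetilde h_i=\min_{u\in U_i}\|\Pi(X_{s_i}-u)\|$ and every $X_{s_i}-u$ lies in $V_S$. The genuine gap is the arithmetic of Step 3. A subspace embedding of a fixed $k$-dimensional subspace with distortion $\eps$ and failure probability $\eta$ requires $d=\Theta\bigl((k\log(1/\eps)+\log(1/\eta))/\eps^2\bigr)$; with $\eps=\Theta(\delta_2/k)$ (forced by the $k$-fold product) and $\log(1/\eta)=\Theta(\log\binom{m}{k})=\Theta(k\log m)$ (forced by the union bound over subsets), this yields $d=\Theta(k\log m\cdot k^2/\delta_2^2)=\Theta(k^3\log m/\delta_2^2)$, not the advertised $\Theta(k^2\log m/\delta_2^2)$. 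The extra factor of $k$ is not a bookkeeping slip in your sketch: any argument that demands \emph{each} height individually be preserved to within $1\pm\Theta(\delta_2/k)$, uniformly over $\exp(\Theta(k\log m))$ events, must pay it.

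The missing idea --- which is the actual content of the cited result --- is to control the product of the $k$ height distortions \emph{jointly} rather than each in the worst case. For a Gaussian $\Pi=G/\sqrt d$ and a fixed $S$, the Bartlett/QR decomposition shows that $\det(\widetilde X_S^\top\widetilde X_S)/\det(X_S^\top X_S)$ is distributed as $\prod_{i=1}^{|S|}\chi^2_{d-i+1}/d$, a product of \emph{independent} chi-squared variables; its logarithm is a sum of $k$ independent terms each with standard deviation $O(1/\sqrt d)$, so it concentrates at scale $\sqrt{k/d}$ rather than $k/\sqrt d$, and the resulting sub-gamma tail $\exp(-c\,dt^2/k)$ at $t=\delta_2$ beats the $\binom{m}{k}$ union bound precisely when $d=\Omega(k^2\log m/\delta_2^2)$. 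Two smaller slips worth fixing if you retain your route for the weaker $k^3$ bound: the renormalization for the one-sided inequality must \emph{dilate} lengths (scale $\Pi$ by $(1-\eps)^{-1}$) so that $\widetilde h_i\ge h_i$, not contract them as written; and the ``projected residual is a feasible point'' observation gives the upper bound $\widetilde h_i\le(1+\eps)h_i$, whereas it is the lower bound on $\widetilde h_i$ that genuinely requires the full subspace embedding.
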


This theorem completes what we need to prove Corollary~\ref{thm:approx}.

\begin{proof} (Corollary~\ref{thm:approx}) The idea is to project $\combinei$ to a lower-dimensional space in a way that the values for submatrix determinants are preserved up to a small multiplicative factor. Then we perform volume sampling. We project columns of $\combinei$, which is in $\mathbb{R}^{m+n}$, to vectors in $\mathbb{R}^d$ where $d = \co\left(k^2\log m\over \delta_2^2\right)$ so as to achieve a $(1 + \delta_2)$ approximation by Theorem~\ref{thm:projection}. Let $G$ be a $d\times (m+n)$-dimensional i.i.d. Gaussian random matrix, then we have
\begin{align}
G{\left[\begin{array}{c}A\\ \sqrt{\eps}I\end{array}\right]} = G_A A + \sqrt{\eps}G_A'
\end{align}
where $G_A\in\mathbb{R}^{d\times n}$ and $G_A'\in\mathbb{R}^{d\times n}$ are two independent Gaussian random matrix. The projected matrix can be computed in $\co(dnm) = \widetilde{\co}(k^2nmn / \delta_2^2)$ time. After that, if we use volume sampling algorithm proposed in~\citep{deshpande2010efficient} the resulting running time would be $\co(kd^3m) = \widetilde{\co}(k^7m/\delta_2^6)$. Thus the total running time would be $\widetilde{\co}({k^2 nm\over \delta_2^2} + {k^7 m \over \delta_2^6})$.
\end{proof}

\paragraph{Remarks.} An interesting observation is that the resulting running time is independent of $\delta_1$, which means one can set $\eps$ arbitrarily small so as to make the approximation in the first step as accurate as possible, without affecting the running time. However, in practice, a very small $\eps$ can result in numerical problems. In addition, the dimensionality reduction is only efficient if $d < m+n$. 

\section{Conditional Expectation}\label{app:sec:derand}
\begin{proof}
We use $A^j$ denote the matrix $A_{[n]\backslash\{j\},:}$, namely matrix $A$ with row $j$ deleted. We have
\begin{align*}
\mathbb{E}&\left[ \|A_S^\dagger \|_F^2 \mid s_1 = i_1,\ldots, s_{t-1}=i_{t-1}\right]\\
&= \sum_{(i_t,\ldots, i_k)\in[m]^{k-t+1}} \|A_S^\dagger\|_F^2 \overrightarrow{P}(s_1=i_1,\ldots, s_k=i_k; A \mid s_1=i_1,\ldots, s_{t-1}=i_{t-1})\\
&= \sum_{(i_t,\ldots, i_k)\in[m]^{k-t+1}} \|A_S^\dagger\|_F^2 {\overrightarrow{P}(s_1=i_1,\ldots, s_k=i_k; A) \over \overrightarrow{P}(s_1=i_1,\ldots, s_{t-1}=i_{t-1}; A)}\\
&= {\sum_{(i_t,\ldots, i_k)\in[m]^{k-t+1}}\det(A_{\{i_1,\ldots,i_k\}} A_{\{i_1,\ldots,i_k\}}^\top) \|A_{\{i_1,\ldots,i_k\}}^\dagger\|_F^2 \over \sum_{(i_t,\ldots, i_k)\in[m]^{k-t+1}}\det(A_{\{i_1,\ldots,i_k\}} A_{\{i_1,\ldots,i_k\}}^\top) }\\
&= {\sum_{j=1}^n\sum_{(i_t,\ldots, i_k)\in[m]^{k-t+1}} \det(A_{\{i_1,\ldots,i_k\}}^j (A_{\{i_1,\ldots,i_k\}}^j)^\top) \over \sum_{(i_t,\ldots, i_k)\in[m]^{k-t+1}}\det(A_{\{i_1,\ldots,i_k\}} A_{\{i_1,\ldots,i_k\}}^\top) }\\
\end{align*}
While the denominator is the (unnormalized) marginal distribution $P(T\subseteq S \mid S\sim P(S;A))$, the numerator is the summation of (unnormalized) marginal distribution $P(T\subseteq S \mid S\sim P(S;A^j))$ for $j=1,\ldots,n$. By Theorem~\ref{thm:marginal} we can compute this expectation in $\co(nm^3)$ time.
\end{proof}

\section{Greedy Derandomization}\label{app:sec:greedy}

\begin{algorithm}[h!]
\begin{algorithmic}\small
\STATE \textbf{Input:} Matrix $A\in\mathbb{R}^{n\times m}$ to sample columns from, $m\le k\le n$ the target size 
\STATE \textbf{Output:} Set $S$ such that $|S| = k$ with the guarantee
\begin{align*}
\|A_S^\dagger\|_F^2 \le {m - n + 1\over k-n+1}\|A^\dagger\|_F^2;\quad 
\|A_S^\dagger\|_2^2 \le {n(m-n+1)\over k-n+1} \|A^\dagger\|_2^2
\end{align*}
\STATE Initialize $\overrightarrow{S}$ as empty tuple
\FOR{$i = 1$ to $k$} 
	\FOR{$j\notin \overrightarrow{S}$}
		\STATE Compute conditional expectation $E_j = \mathbb{E}\left[ \|A_T^\dagger \|_F^2 \mid t_1 = s_1,\ldots, t_{i-1}=s_{i-1}, t_i = j\right]$ with Corollary~\ref{cor:conditional}.
	\ENDFOR
	\STATE Choose $j = \arg\min_{j\notin \overrightarrow{S}} E_j$
	\STATE $\overrightarrow{S} = \overrightarrow{S}\circ j$
\ENDFOR
\STATE Output $\overrightarrow{S}$ as a set $S$ 
\end{algorithmic}
\caption{Derandomized Dual Volume Sampling for Column Subset Selection.}
\label{algo:derand}
\end{algorithm}

\begin{theorem}\label{thm:derandboundsa} Algorithm~\ref{algo:derand} is a derandomization of dual volume sampling that selects a set $S$ of columns satisfying
\begin{align*}
\|A_S^\dagger\|_F^2 \le {m - n + 1\over k-n+1}\|A^\dagger\|_F^2;\quad 
\|A_S^\dagger\|_2^2 \le {n(m-n+1)\over k-n+1} \|A^\dagger\|_2^2.
\end{align*}
\end{theorem}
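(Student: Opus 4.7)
The proof plan is to invoke the classical method of conditional expectations, using the guarantee from dual volume sampling in~\eqref{eq:bounds} as the starting unconditional bound, and using Corollary~\ref{cor:conditional} to evaluate the conditional expectations that drive the greedy choice.

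First I would set up the running potential $\Phi_t := \mathbb{E}\bigl[\|A_S^\dagger\|_F^2 \,\big|\, s_1=j_1,\ldots,s_t=j_t\bigr]$ evaluated at the tuple $\overrightarrow{S}_t = (j_1,\ldots,j_t)$ the algorithm has selected after $t$ rounds, with $\Phi_0 = \mathbb{E}[\|A_S^\dagger\|_F^2]$. The bound from~\eqref{eq:bounds} gives $\Phi_0 \le \tfrac{m-n+1}{k-n+1}\|A^\dagger\|_F^2$, and after $k$ rounds $\Phi_k = \|A_{S_k}^\dagger\|_F^2$ because the tuple fully determines $S$. The core step is to check that the greedy rule forces $\Phi_{t}\le \Phi_{t-1}$ for every $t$. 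This is the standard averaging argument: the tower property yields
\begin{equation*}
\Phi_{t-1} \;=\; \sum_{j}\overrightarrow{P}(s_t=j\mid s_1=j_1,\ldots,s_{t-1}=j_{t-1})\cdot \mathbb{E}\bigl[\|A_S^\dagger\|_F^2 \,\big|\, s_1=j_1,\ldots,s_{t-1}=j_{t-1}, s_t=j\bigr],
\end{equation*}
so the minimum over $j$ of the inner conditional expectation is at most $\Phi_{t-1}$, and the greedy pick $j^* = \argmin_j E_j$ trivially achieves that minimum. Chaining these inequalities gives $\|A_{S_k}^\dagger\|_F^2 = \Phi_k \le \Phi_0 \le \tfrac{m-n+1}{k-n+1}\|A^\dagger\|_F^2$, which is the Frobenius bound.

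For the spectral bound, I would simply use the standard norm comparison: $A_{S_k}^\dagger$ has at most $n$ nonzero singular values (since $A_{S_k}$ has full row rank $n$ by assumption), so $\|A_{S_k}^\dagger\|_2^2 \le \|A_{S_k}^\dagger\|_F^2$, while $\|A^\dagger\|_F^2 \le n\,\|A^\dagger\|_2^2$. Chaining these with the Frobenius guarantee yields $\|A_{S_k}^\dagger\|_2^2 \le \tfrac{m-n+1}{k-n+1}\|A^\dagger\|_F^2 \le \tfrac{n(m-n+1)}{k-n+1}\|A^\dagger\|_2^2$, explaining the slight looseness relative to~\eqref{eq:bounds}.

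The one subtlety I anticipate, and the main obstacle, is ensuring that the conditional expectations are well-defined throughout the run, i.e.\ that the marginal $\overrightarrow{P}(s_1=j_1,\ldots,s_t=j_t;A)$ stays strictly positive along the greedy path so that Corollary~\ref{cor:conditional} is applicable. I would handle this by observing that if $\Phi_{t-1}$ is finite then the averaging identity above forces the chosen minimizer $j^*$ to lie in the support of $\overrightarrow{P}(s_t\cdot\mid s_1,\ldots,s_{t-1})$; otherwise the conditional expectation conditioned on $s_t=j^*$ would be $+\infty$ by convention (or undefined and ignored in the argmin), contradicting minimality. Equivalently, any $j$ giving a zero marginal can be excluded from the argmin set. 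Combined with the global full-row-rank hypothesis on $A$, this guarantees that the algorithm always has a feasible greedy choice, closing the proof.
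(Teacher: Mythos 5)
Your proposal is correct and follows essentially the same route as the paper's proof: the method of conditional expectations, using the unconditional bound from~\eqref{eq:bounds} as the base case, the tower-property averaging argument to show the greedy minimizer never increases the conditional expectation, and the standard norm comparison $\|A_S^\dagger\|_2^2 \le \|A_S^\dagger\|_F^2$ together with $\|A^\dagger\|_F^2 \le n\|A^\dagger\|_2^2$ for the spectral bound. Your additional care about the conditional expectations remaining well-defined along the greedy path is a reasonable (and slightly more thorough) touch that the paper's proof leaves implicit.
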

\begin{proof}
Observe that at each iteration $t$, we have
\begin{align*}
\mathbb{E}&\bigl[\|A_T^\dagger\|_F^2 \mid t_1 = s_1,\ldots, t_{i-1}=s_{i-1}\bigr]\\
&= \nlsum_{j\notin \overrightarrow{S}}\overrightarrow{P}(t_i = j | t_1=s_1,\ldots, t_{i-1}=s_{i-1})\mathbb{E}\left[\|A_T^\dagger\|_F^2 \mid t_1 = s_1,\ldots, t_{i-1}=s_{i-1}, t_i=j\right],
\end{align*}
and we choose $j$ such that $\mathbb{E}\bigl[\|A_T^\dagger\|_F^2 \mid t_1 = s_1,\ldots, t_{i-1}=s_{i-1}, t_i=j\bigr]$ is minimized. Since at the beginning we have
\begin{align*}
\mathbb{E}\bigl[\|A_T^\dagger\|_F^2\bigr] \le {m - n + 1\over k-n+1}\|A^\dagger\|_F^2; \quad T\sim P(T;A),
\end{align*}
it follows that the conditional expectation satisfies
\begin{align*}
\mathbb{E}\bigl[\|A_T^\dagger\|_F^2 \mid t_1 = s_1,\ldots, t_{i-1}=s_{i-1}, t_i=j\bigr] \le {m - n + 1\over k-n+1}\|A^\dagger\|_F^2.
\end{align*}
Hence we have
\begin{align*}
\|A_S^\dagger\|_F^2 = \mathbb{E}\bigl[\|A_T^\dagger\|_F^2 \mid t_1 = s_1,\ldots, t_{k-1}=s_{k-1}, t_k=s_k\bigr] \le {m - n + 1\over k-n+1}\|A^\dagger\|_F^2.
\end{align*}
Further, by using standard bounds relating the operator norm to the Frobenius norm, we obtain
\begin{align*}
\|A_S^\dagger\|_2^2 \le \|A_S^\dagger\|_F^2  \le {m - n + 1\over k-n+1}\|A^\dagger\|_F^2\le {n(m-n+1)\over k-n+1} \|A^\dagger\|_2^2. 
\end{align*}
\end{proof}

\section{Initialization}\label{app:sec:init}

Set $\eps = \min_{|S|=k}\sigma_n^2(A_S) > 0$, whereby
\begin{align*}
\det(A_S A_S^\top + \eps I_n) = \eps^{n-k}\det(A_S^\top A_S + \eps I_k) \propto VolSmpl\bigl(S;[A^\top\ \ \sqrt{\eps}I_m]^\top\bigr).
\end{align*}
The rhs is a distribution induced by volume sampling. Greedily choosing columns of $A$ one by one gives a $k!$ approximation to the maximum volume submatrix~\citep{ccivril2009selecting}. This results in a set $S$ such that
\begin{align*}
\det(A_S A_S^\top) &\ge {1\over 2^n}\det(A_S A_S^\top + \eps I_n) = {1\over 2^n\eps^{k-n}}\det(A_S^\top A_S + \eps I_k) \\
&\ge \max_{|S|=k}{1\over 2^nk!\eps^{k-n}}\det(A_S^\top A_S + \eps I_k)
= \max_{|S|=k}{1\over 2^n k!}\det(A_SA_S^\top + \eps I_n) \\
&\ge {1\over 2^n k!{m\choose k}}\sum_{|S|=k}\det(A_SA_S^\top + \eps I_n) \ge {1\over 2^n k!{m\choose k}}\sum_{|S|=k}\det(A_SA_S^\top).
\end{align*}
Thus, $\log P(S;A)^{-1} \ge \log(2^n k!{m\choose k}) = \co(k\log m)$. Note that in practice it is hard to set $\eps$ to be exactly $\min_{|S|=k}\sigma_n^2(A_S)$, but a small approximate value suffices.

\section{Experiments}\label{app:sec:exp}

We show full results on CompAct(s), CompAct, Abalone and Bank32NH datasets in Figure~\ref{app:fig:cpu_small},~\ref{app:fig:cpu},~\ref{app:fig:abalone} and~\ref{app:fig:bank} respectively. We also run DVS-*, which is ${1\over *}$-generalized DVS algorithm. We observe that decreasing $\beta$ sometimes helps but sometimes not. In Figure~\ref{app:fig:bank} we observe that optimization- or greedy-based methods, while taking a huge amount of time to run, perform better than all sampling-based methods, thus for these selection methods, one is not always superior than another.

\begin{figure}[h!]
\centering
\includegraphics[width=0.9\textwidth]{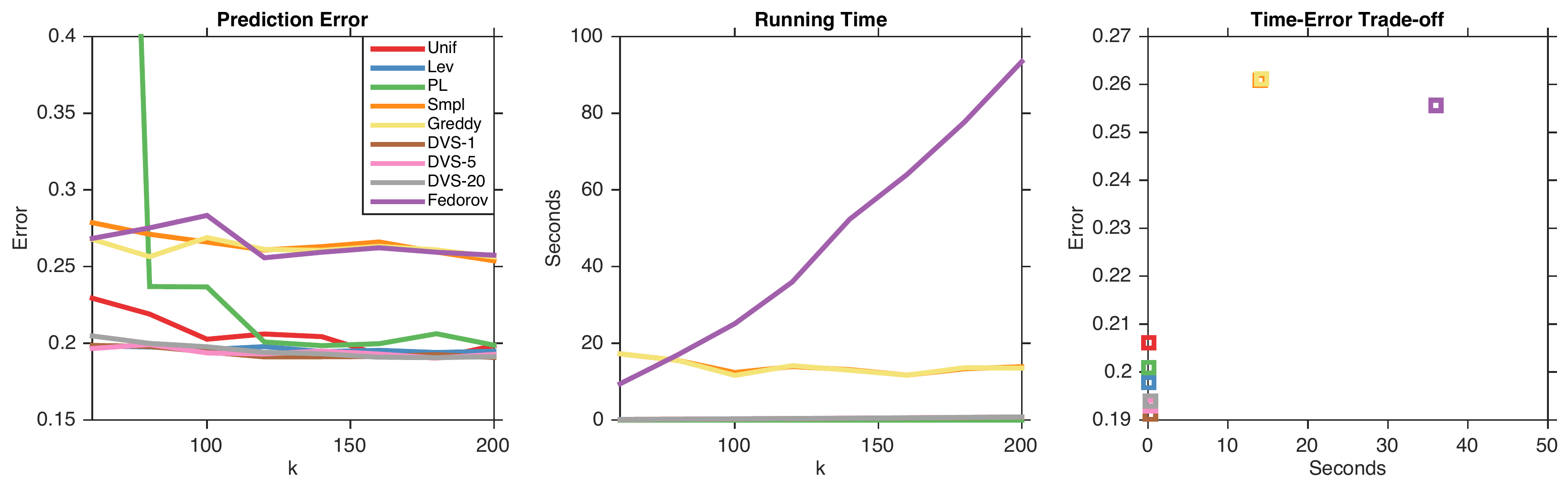}
\caption{Results on CompAct(s). Note that \unif, \lev, \pl and \texttt{DVS} use less than 1 second to finish experiments.}
\label{app:fig:cpu_small}
\end{figure}

\begin{figure}[h!]
\centering
\includegraphics[width=0.9\textwidth]{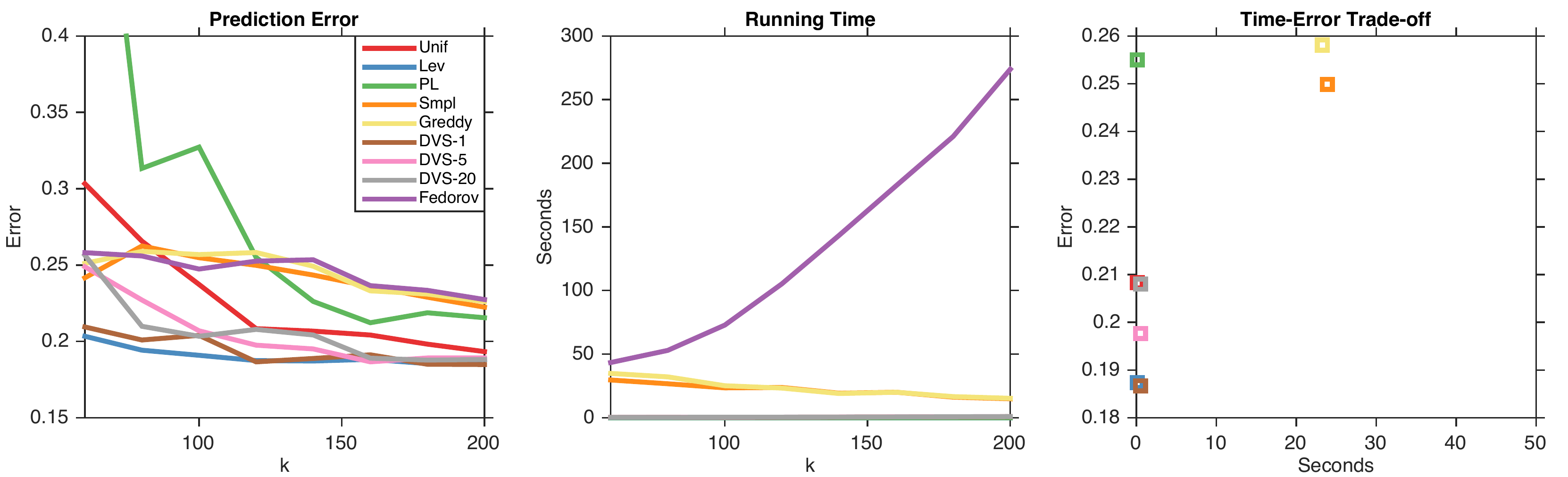}
\caption{Results on CompAct. Note that \unif, \lev, \pl and \texttt{DVS} use less than 1 second to finish experiments.}
\label{app:fig:cpu}
\end{figure}

\begin{figure}[h!]
\centering
\includegraphics[width=0.9\textwidth]{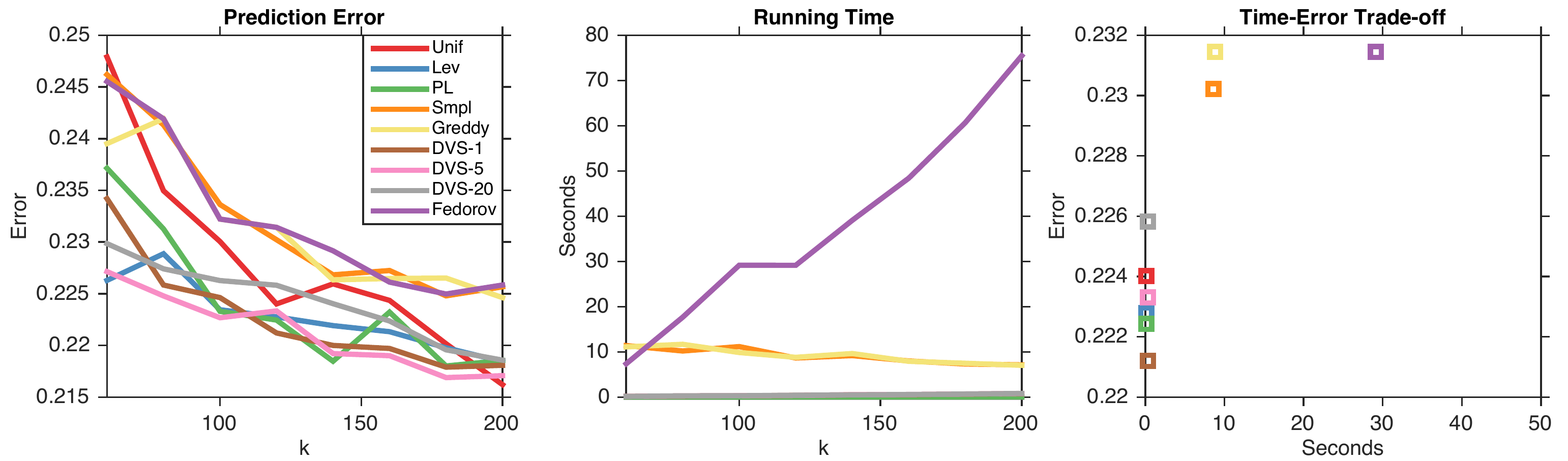}
\caption{Results on Abalone. Note that \unif, \lev, \pl and \texttt{DVS} use less than 1 second to finish experiments.}
\label{app:fig:abalone}
\end{figure}

\begin{figure}[h!]
\centering
\includegraphics[width=0.9\textwidth]{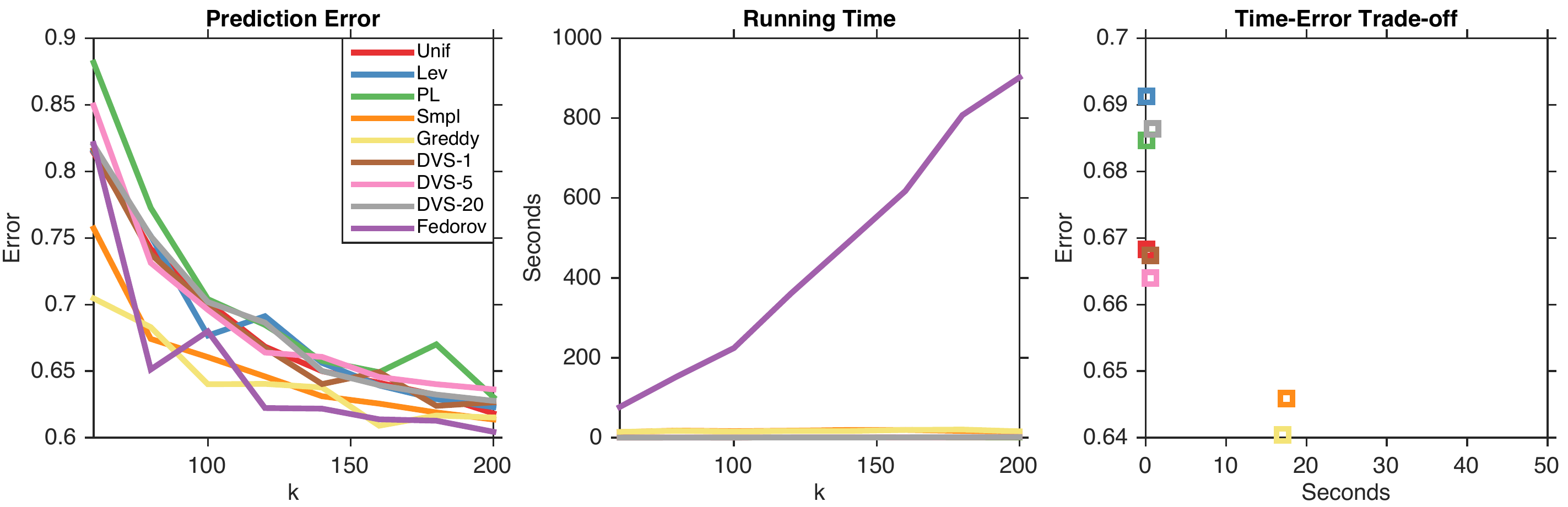}
\caption{Results on Bank32NH. Note that \unif, \lev, \pl and \texttt{DVS} use less than 1 second to finish experiments.}
\label{app:fig:bank}
\end{figure}

\end{appendix}

\end{document}